\def\tt{true}
\def\ff{false}
\newtheorem{definition}{Definition}
\newtheorem{theorem}{Theorem}
\newtheorem{lemma}{Lemma}
\newcommand{\LTL}{{\sc ltl}\xspace} 
\newcommand{\LTLf}{{\sc ltl}$_f$\xspace} 
\newcommand{\Prop}{\mathcal{P}} 
\newcommand{\G}{\mathcal{G}} 
\newcommand{\X}{\mathcal{X}} 
\newcommand{\Y}{\mathcal{Y}} 
\newcommand{\DFA}{{\sc dfa}\xspace} 
\newcommand{\T}{\mathcal{T}} 
\newcommand{\limp}{\mathbin{\rightarrow}} 
\title{\LTLf Synthesis with Fairness and Stability Assumptions}
\author{
Shufang Zhu,\textsuperscript{\rm 1}
Giuseppe De Giacomo,\textsuperscript{\rm 2}
Geguang Pu,\textsuperscript{\rm 1}
Moshe Y. Vardi\textsuperscript{\rm 3}\\ 
\textsuperscript{\rm 1} Each China Normal University,  
\textsuperscript{\rm 2} Sapienza Universit\`a di Roma,
\textsuperscript{\rm 3} Rice University\\
shufangzhu.szhu@gmail.com, degiacomo@diag.uniroma1.it, ggpu@sei.ecnu.edu.cn, vardi@cs.rice.edu 
}
\begin{document}
\maketitle
\begin{abstract}
	In synthesis, assumptions are constraints on the environment that rule out certain environment behaviors. A key observation here is that even if we consider systems with \LTLf goals on finite traces, environment assumptions need to be expressed over infinite traces, since accomplishing the agent goals may require an unbounded number of environment action.
	To solve synthesis with respect to finite-trace \LTLf goals under infinite-trace assumptions, we could reduce the problem to \LTL synthesis. Unfortunately, while synthesis in \LTLf and in \LTL have the same worst-case complexity~(both 2EXPTIME-complete), the algorithms available for \LTL synthesis are much more difficult in practice than those for \LTLf synthesis.  
	In this work we show that in interesting cases we can avoid such a detour to \LTL synthesis and keep the simplicity of \LTLf synthesis. Specifically, we develop a BDD-based fixpoint-based technique for handling basic forms of fairness and of stability assumptions. We show, empirically, that this technique performs much better than standard \LTL synthesis.
\end{abstract}

\section{Introduction}

In many situations we are interested in expressing properties over an unbounded but finite sequence of successive states. Linear-time Temporal Logic over finite traces~(\LTLf) and its variants have been thoroughly investigated for doing so. 
There has been broad research for logical reasoning~\cite{DV13,LiRPZV19}, synthesis~\cite{DegVa15,CamachoBMM18}, and planning~\cite{CTMBM17,DegRu18}.

Recently synthesis under assumptions in \LTLf has attracted specific interest~\cite{DegRu18,CamachoBM18}.
First, planning for \LTLf goals can be considered as a form of \LTLf synthesis under assumptions, where the assumptions are the dynamics of the environment encoded in the planning domain \cite{Gree69,CamachoBM18,AminofGMR18,ADMRicaps19}.
However, more generally, assumptions can be arbitrary constraints on the environment that can be exploited by the agent in devising a strategy to fulfill its goal.

Synthesis under assumptions has been extensively studied in \LTL, where environment assumptions are expressed as \LTL formulas~\cite{ChatterjeeH07,ChatterjeeHJ08,DIppolitoBPU13,BloemEK15,BrenguierRS17}. In fact, \LTL formulas can be used as assumptions as long as it is guaranteed that the environment is able to behave so as to keep the assumptions true, i.e., assumptions are environment realizable. Under these circumstances, it is possible
to reduce synthesis for \LTL goal $\psi_G$ under assumptions $\psi_A$ to standard synthesis for $\psi_A \limp \psi_G$. Note that because of the guarantee of $\psi_A$ being environment realizable, no agent strategy can win $\psi_A \limp \psi_G$ by falsifying $\psi_A$. See~\cite{ADMRicaps19} for a discussion.

When we turn to \LTLf, a key observation is that even if we consider (finite-trace) \LTLf goals for the agent, assumptions need to be expressed considering infinite traces, since accomplishing the agent goals may require an unbounded number of environment action. So we have an assumption $\psi_A $ expressed in \LTL and a goal $\phi_G$ expressed in \LTLf. To solve synthesis under
assumptions in \LTLf, we could translate $\phi_G$ into \LTL getting $\psi_G$, by applying the translation of \LTLf into \LTL in~\cite{DV13}, and then do \LTL synthesis for $\psi_A \limp \psi_G$, see e.g.~\cite{CamachoBM18}.

Unfortunately, while synthesis in \LTLf and in \LTL have the same worst-case complexity, being both 2EXPTIME-complete~\cite{PnueliR89,DegVa15}, the algorithms available for \LTL synthesis are much harder in practice than those for \LTLf synthesis. In particular, the lack of efficient algorithms for the crucial step of automata determinization is prohibitive for finding scalable implementations~\cite{DFogartyKVW13,finkbeiner2016synthesis}. In spite of recent advancement in synthesis such as reducing to parity games~\cite{MeyerSL18}, bounded synthesis based on solving iterated safety games~\cite{KupfermanV05,FinkbeinerS13,GerstackerKF18}, or recent techniques based on iterated FOND planning~\cite{CamachoBMM18}, \LTL synthesis remains very challenging. In contrast, \LTLf synthesis is based on a translation to Deterministic Finite Automaton~(\DFA)~\cite{RaSc59}, which can be seen as a game arena where environment and agent make their own moves. On this arena, the agent wins if a simple fixpoint condition~(reachability of the \DFA accepting states) is satisfied.

Hence, when we introduce assumptions, an important question arises: can we retain the simplicity of \LTLf synthesis? In particular,  we are thinking about algorithms based on devising some sort of arena and then extracting winning strategies by relying on computing a small number of nested fixpoints~(note that the reduction of \LTL synthesis to parity games may generate exponentially many nested fixpoints~\cite{ALG02bis}).

We consider here two different basic, but quite significant, forms of assumptions: a basic form of \emph{fairness} $GF\alpha$~(always eventually $\alpha$), and a basic form of \emph{stability} $FG\alpha$~(eventually always $\alpha$), where in both cases the truth value of  $\alpha$ is under the control of the environment, and hence the assumptions are trivially realizable by the environment. Note that due to the existence of \LTLf goals, synthesis under both kinds of assumptions does not fall under known easy forms of synthesis, such as GR(1) formulas~\cite{BloemJPPS12}. For these kinds of assumptions, we devise a specific algorithm based on using the \DFA for the \LTLf goal as the arena and then computing 2-nested fixpoint properties over such arena. It should be noted that the kind of nested fixpoint that we compute for \emph{fairness} $GF\alpha$ is similar to the one in~\cite{DegRu18}, but it is clear that the ``fairness" stated there is different from what we claim in this paper. The ``fairness" in~\cite{DegRu18} is interpreted as all effects happening fairly, therefore the assumption is hardcoded in the arena itself. Here, instead, we only require that a selected condition $\alpha$ happens fairly, and our technique extends to deal with \emph{stability} assumptions as well. We compare the new algorithm with standard \LTL synthesis~\cite{MeyerSL18} and show empirically that this algorithm performs significantly better, in the sense that solving more cases with less time cost. Some proofs have been removed due to the lack of space.\footnote{A full version is available on arXiv. Geguang Pu is the corresponding author.}

\section{Preliminaries}
\textit{Linear-time Temporal Logic over finite traces} (\LTLf) has the same syntax as \LTL over infinite traces introduced in~\cite{Pnu77}. Given a set of propositions $\Prop$, the syntax of \LTLf formulas is defined as 	$\phi ::= 
a\ |\ \neg \phi\ |\ \phi_1\wedge\phi_2\ |\ X\phi\ |\ \phi_1 U \phi_2$.
Every $a\in \Prop$ is an \textit{atom}. A literal $l$ is an atom or the negation of an atom. $X$ for ``Next", and $U$ for ``Until",
are temporal operators. 
We make use of the standard Boolean abbreviations, such as $\vee$~(or) and $\rightarrow$~(implies), $\tt$ and $\ff$. Additionally, we define the following abbreviations ``Weak Next" $X_w \phi\equiv \neg X\neg \phi$, ``Eventually" $F\phi\equiv \tt U\phi$ and ``Always" $G\phi\equiv \ff R\phi$, where $R$ is for ``Release". 

A \textit{trace} $\rho = \rho[0],\rho[1],\ldots$ is a sequence of propositional interpretations (sets), where 
$\rho[m]\in 2^\Prop$ ($m \geq 0$) is the $m$-th interpretation of $\rho$, and $|\rho|$ represents the length of $\rho$. Intuitively, $\rho[m]$ is interpreted as the set of propositions which are $true$ at instant $m$.
Trace $\rho$ is an \textit{infinite} trace if $|\rho| = \infty$, which is formally denoted as $\rho\in (2^\Prop)^{\omega}$; 
otherwise $\rho$ is a \textit{finite} trace, denoted as $\rho\in (2^\Prop)^{*}$. \LTLf formulas are interpreted over finite, nonempty traces. Given a finite trace $\rho$ and an \LTLf formula 
$\phi$, we inductively define when $\phi$ is $true$ on $\rho$ at step $i$ ($0 \leq i < |\rho|$), written $\rho, i \models \phi$, as follows: 

\noindent
$\bullet~\rho, i \models a$ iff $a \in \rho[i]$; \\
$\bullet~\rho, i \models \neg \phi$ iff $\rho,i \not\models \phi$; \\
$\bullet~\rho, i \models\phi_1 \wedge \phi_2$ iff $\rho,i \models \phi_1$ and $\rho, i \models \phi_2$; \\
$\bullet~\rho, i \models X\phi$ iff $i+1 < |\rho|$ and $\rho, i+1 \models \phi$; \\
$\bullet~\rho, i \models \phi_1 U \phi_2$ iff there exists $j$ such that $i\leq j < |\rho|$ and $\rho, j\models \phi_2$, and for all $k$, $i \leq k < j$, we have $\rho, k \models \phi_1$.

An \LTLf formula $\phi$ is $true$ on $\rho$, denoted by $\rho \models \phi$, if and only if $\rho, 0\models\phi$. 

\textit{\LTLf synthesis} can be viewed as a game of two players, the \textit{environment} and the \textit{agent}, contrasting each other. The aim is to synthesize a strategy for the agent such that no matter how the environment behaves, the combined behavior trace of both players satisfy the logical specification expressed in \LTLf~\cite{DegVa15}.

\section{Fair and Stable \LTLf Synthesis}
In this paper, we focus on \LTLf synthesis under assumptions in two different basic forms: fairness and stability, which we call in the following~\emph{fair \LTLf synthesis} and \emph{stable \LTLf synthesis}, respectively. In such synthesis problems, both players~(environment and agent) have Boolean variables under their respective control. Here, we use $\X$ to denote the set of environment variables that are uncontrollable for the agent, and $\Y$ the set of agent variables that are controllable for the agent. Therefore, $\X$ and $\Y$ are disjoint.

In general, assumptions are specific forms of constraints.
\begin{definition}[Environment Constraint]
	An environment constraint  $\alpha$ is a Boolean formula over $\X$.
\end{definition}
In particular, we define here two different basic, but common forms of assumptions.
\begin{definition}[Fairness and Stability Assumptions]
	An \LTL formula $\psi$ is considered as a fairness assumption if it is of the form $GF\alpha$, and a stability assumption if of the form $FG\alpha$, where in both cases $\alpha$ is an environment constraint.
\end{definition}
A fair or stable trace can then be defined in terms of the corresponding assumption~(fairness or stability).

\begin{definition}[Fair and Stable Traces]
	A trace $\rho \in (2^{\X \cup \Y})^\omega$ is $\alpha$-fair if $\rho \models GF \alpha$ and it is $\alpha$-stable if  $\rho \models FG \alpha$.
\end{definition}
Intuitively, $\alpha$ holds infinitely often on an $\alpha$-fair trace, while eventually holds forever on an $\alpha$-stable trace.
Note that, if trace $\rho$ is not $\alpha$-fair, i.e., $\rho \nvDash GF\alpha$, then $\rho \models  FG(\neg \alpha)$ such that $\rho$ is $\neg \alpha$-stable. Similarly, if trace $\rho$ is not $\alpha$-stable, i.e., $\rho \nvDash FG\alpha$, then $\rho \models  GF(\neg \alpha)$ such that $\rho$ is $\neg \alpha$-fair. Although there is a duality between fairness and stability, such duality breaks when applying these environment assumptions to the problem of \LTLf synthesis. This is because in addition to the assumptions, the synthesis problems also require the \LTLf specification to be satisfied.

We now define fair and stable \LTLf synthesis by making use of fair and stable traces.

\begin{definition}[Fair~(Stable) \LTLf Synthesis]
	\LTLf formula $\phi$, defined over $\X \cup \Y$, is $\alpha$-fair~(resp., $\alpha$-stable) realizable if there exists a strategy $g: (2^{\X})^+\rightarrow 2^{\Y}$, such that for an arbitrary environment trace $\lambda = X_0,X_1,\ldots\in (2^{\X})^{\omega}$, if $\lambda$ is $\alpha$-fair~(resp., $\alpha$-stable), then we can find $k\geq 0$ such that $\phi$ is $true$ in the finite trace $\rho^k = (X_0\cup g(X_0)), (X_1\cup g(X_0,X_1)), \ldots, (X_k\cup g(X_0,X_1,\ldots,X_{k}))$. 
	
	A fair~(resp., stable) \LTLf synthesis problem, described as a tuple $\langle \X, \Y, \alpha, \phi \rangle$, consist in checking whether $\phi$, defined over $\X \cup \Y$, is $\alpha$-fair~(resp., $\alpha$-stable) realizable.	The synthesis procedure aims to computing a strategy if realizable.
\end{definition}

Intuitively speaking, $\phi$ describes the desired goal when the environment behaviors satisfy the assumption.
An agent strategy $g:~(2^{\X})^+\rightarrow 2^{\Y}$ for fair~(resp., stable) synthesis problem $\langle \X, \Y, \alpha, \phi \rangle$ is \emph{winning} if it guarantees the satisfaction of the objective $\phi$ under the condition that the environment behaves in a way that $\alpha$ holds infinitely often~(resp., $\alpha$ eventually holds forever). The \emph{realizability procedure} of $\langle \X, \Y, \alpha, \phi \rangle$ aims to answer the existence of a winning strategy $g$ and the \emph{synthesis procedure} amounts for computing $g$ if it exists.
In fact one can consider two variants of the synthesis problem, depending on the player who moves first, in the sense of assigning values to variables under its control first. Here we consider the environment as the first-player (as in planning), but a version where the agent moves first can be obtained by a small modification. 

Since every \LTLf formula $\phi$ can be translated to a Deterministic Finite Automaton~(\DFA) $\G_\phi$ that accepts exactly the same language as $\phi$~\cite{DV13}, we are able to reduce the problems of fair \LTLf synthesis and stable \LTLf synthesis to specific two-player \DFA games, in particular, fair \DFA game and stable \DFA game, respectively. We start with introducing \DFA games.

\subsection{Games over \DFA}
Two-player games on \DFA are games consisting of two players, the \emph{environment} and the \emph{agent}. $\X$ and $\Y$ are disjoint sets of environment Boolean variables and agent Boolean variables, respectively.
The \emph{specification} of the game arena is given by a \DFA $\G$ = $(2^{\X\cup \Y}, S, s_0, \delta, Acc)$, where
$2^{\X\cup \Y}$ is the alphabet, $S$ is a set of states, $s_0\in S$ is an initial state, $\delta: S\times 2^{\X\cup \Y}\rightarrow S$ is a transition function and $Acc\subseteq S$ is a set of accepting states.

A \emph{round} of the game consists of both players setting the values of variables under their respective control. A \emph{play} $\rho$ over $\G$ records how two players set the values at each round and how the \DFA proceed according to the values. Formally, a play $\rho$ from state $s_{i_0}$ is an infinite trace $(s_{i_0},X_0 \cup Y_0),(s_{i_1},X_1 \cup Y_1)\ldots\in (S\times2^{\X\cup\Y})^\omega$ such that $s_{i_{j+1}}=\delta(s_{i_j},X_j \cup Y_j)$. Moreover, we also assign the \emph{environment} as the first-player, which sets values first.

A play $\rho$ is considered as a winning play if it follows a certain winning condition. Different winning conditions lead to different games. In this paper, we consider two specific two-player games, fair \DFA game and stable \DFA game, both of which are described as $\langle \G, \alpha \rangle$, where $\G$ is the game arena and $\alpha$ is the environment constraint.

\noindent\textbf{Fair DFA Game.}
Although the ultimate goal for solving a fair \DFA game is to perform winning plays for the agent, since it is more straightforward to formulate the game considering the environment as the protagonist, we first define the winning condition of the environment over a play. A play $\rho=(s_{i_0},X_0 \cup Y_0),(s_{i_1},X_1 \cup Y_1)\ldots$ over $\G$ is \emph{winning} for the \emph{environment} with respect to a fair \DFA game $\langle \G, \alpha \rangle$ if the following two conditions hold:

\noindent
$\bullet$~\emph{Recurrence}: $\rho$ is $\alpha$-fair~(that is, $\rho \models GF\alpha$), \\
$\bullet$~\emph{Safety}: $s_{i_j}\not\in Acc$ for all $j\geq 0$~($Acc$ is avoided).

Consequently, a play $\rho$ is \emph{winning} for the \emph{agent} if one of the following conditions holds:

\noindent
$\bullet$~\emph{Stability}: $\rho$ is not $\alpha$-fair~(that is, $\rho \models FG(\neg\alpha)$),\\
$\bullet$~\emph{Reachability}: $s_{i_j}\in Acc$ for some $j\geq 0$~($Acc$ is reached).

\noindent\textbf{Stable DFA Game.}
As for a stable \DFA game $\langle \G, \alpha \rangle$, a play $\rho=(s_{i_0},X_0 \cup Y_0),(s_{i_1},X_1 \cup Y_1)\ldots$ over $\G$ is \emph{winning} for the \emph{environment} if the following two conditions hold:

\noindent
$\bullet$~\emph{Stability}: $\rho$ is $\alpha$-stable~(that is, $\rho \models FG\alpha$),\\
$\bullet$~\emph{Safety}: $s_{i_j}\not\in Acc$ for all $j\geq 0$~($Acc$ is avoided).

Consequently, a play $\rho$ is \emph{winning} for the \emph{agent} if one of the following conditions holds:

\noindent
$\bullet$~\emph{Recurrence}: $\rho$ is not $\alpha$-stable~(that is, $\rho \models GF(\neg\alpha)$), \\
$\bullet$~\emph{Reachability}: $s_{i_j}\in Acc$ for some $j\geq 0$~($Acc$ is reached).

Since we consider here the environment as the first-player, a strategy for the agent is a function $g: (2^{\X})^+ \rightarrow 2^{\Y}$, deciding the values of the controllable variables for every possible history of the uncontrollable variables. Respectively, an environment strategy is a function $h: (2^{\Y})^* \rightarrow 2^{\X}$.
A play $\rho = (s_{i_0},X_0 \cup Y_0),(s_{i_1},X_1 \cup Y_1)\ldots\in (S\times2^{\X\cup\Y})^\omega$ \emph{follows} a strategy $g$ (resp., a strategy $h$), if $Y_j=g(X_0,\ldots,X_{j})$ for all $j\geq 0$ (resp., $X_j=h(Y_0,\ldots,Y_{j-1})$ for all $j > 0$).

We can now define winning states and winning strategies.
\begin{definition}[Winning State and Winning Strategy]\label{def:win_state}
	In the game $\langle \G, \alpha \rangle$ described above, $s \in S$ is a winning state for the agent~(resp., environment) if there exists strategy $g$~(resp., $h$) s.t. every play $\rho$ from $s$ that follows $g$~(resp., $h$) is an agent~(resp., environment) winning play. Then $g$~(resp., $h$) is a winning strategy for the agent~(resp., environment) from $s$.  
	
\end{definition}
As shown in~\cite{Mar75}, both of the fair \DFA game and stable \DFA game described above are \emph{determined}, that is, a state $s \in S$ is a winning state for the agent if and only if $s$ is not a winning state for the environment. The \emph{realizability} procedure of the game consists of checking whether there exists a winning strategy for the agent from initial state $s_0$. The \emph{synthesis} procedure aims to computing such a strategy. 

We then show how to reduce the problems of fair \LTLf synthesis and stable \LTLf synthesis to fair \DFA game and stable \DFA game, respectively. Hence we can solve the \DFA game, thus settling the corresponding synthesis problem.

\section{Solution to Fair \LTLf Synthesis}~\label{sec:fair}
In order to perform fair synthesis on \LTLf, given problem $\langle \X, \Y, \alpha, \phi \rangle$, we first translate the \LTLf specification $\phi$ into a \DFA $\G_\phi$. We then view $\langle \G_\phi, \alpha \rangle$ as a fair \DFA game, and consider exactly the separation between environment and agent variables as in the original synthesis problem. Specifically, we assign $\X$ as the environment variables and $\Y$ as the agent variables. Finally, we solve the fair \DFA game, thus settling the fair \LTLf synthesis problem. The following theorem assesses the correctness of this technique.
\begin{theorem}
	Fair \LTLf synthesis problem $\langle \X, \Y, \alpha, \phi \rangle$ is realizable iff fair \DFA game $\langle \G_\phi, \alpha \rangle$ is realizable.
\end{theorem}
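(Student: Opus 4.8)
The plan is to prove both directions by exhibiting a single agent strategy that serves simultaneously for the synthesis problem and for the \DFA game, exploiting the fact that in both settings an agent strategy has exactly the same type $g : (2^{\X})^+ \rightarrow 2^{\Y}$. The whole argument rests on two observations that I would establish first. \emph{Observation 1 (fairness depends only on the environment):} because $\alpha$ is a Boolean formula over $\X$ alone, whether a combined play $\rho = (s_{i_0},X_0 \cup Y_0),(s_{i_1},X_1 \cup Y_1)\ldots$ is $\alpha$-fair is determined entirely by its environment projection $\lambda = X_0, X_1, \ldots$; that is, $\rho \models GF\alpha$ iff $\lambda \models GF\alpha$. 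Hence the agent's \emph{Stability} disjunct ($\rho$ not $\alpha$-fair) corresponds exactly to the environment traces that the fair-synthesis definition quantifies \emph{away}. \emph{Observation 2 (acceptance equals reachability):} since $\G_\phi$ recognizes exactly the finite traces satisfying $\phi$, for the play induced by $g$ on $\lambda$ some finite prefix $\rho^k$ satisfies $\phi$ iff the run of $\G_\phi$ on the combined trace visits an accepting state, i.e.\ iff the agent's \emph{Reachability} condition $s_{i_j}\in Acc$ holds for some $j$. Determinism of $\G_\phi$ guarantees that $g$ together with $\lambda$ induces a unique play, so these statements are well defined.

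Given these, the forward direction is short. Assuming a winning synthesis strategy $g$, I would reinterpret $g$ as an agent strategy in $\langle \G_\phi, \alpha\rangle$ and consider an arbitrary play from $s_0$ following it, with environment projection $\lambda$. If $\lambda$ is not $\alpha$-fair, the agent wins by Stability; otherwise the synthesis guarantee yields some $k$ with $\rho^k \models \phi$, and Observation~2 gives Reachability. Thus every such play is agent-winning and the game is realizable.

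For the converse, assuming a winning game strategy $g$, I would take an arbitrary $\alpha$-fair environment trace $\lambda$. By Observation~1 the induced play $\rho$ is itself $\alpha$-fair, so the Stability disjunct fails; since $\rho$ is nonetheless agent-winning, Reachability must hold, whence Observation~2 delivers the required $k$ with $\rho^k \models \phi$. Therefore $g$ witnesses fair realizability of the synthesis problem.

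The main obstacle is not a deep argument but careful bookkeeping. First, I must check that the ``environment moves first'' convention lines up identically in the two definitions, with $Y_j = g(X_0,\ldots,X_j)$ in each, so that the same $g$ transfers verbatim. Second, I must pin down the exact index correspondence underlying Observation~2: a satisfying prefix $\rho^k$ of length $k+1$ matches the \DFA state $s_{i_{k+1}}$ reached after reading it, and the nonempty-trace semantics of \LTLf makes the $j=0$ (initial-state) case immaterial. Once $\alpha$-fairness is isolated as a purely environmental property, the Stability/Reachability disjunction of the game matches the implication structure (fair environment $\Rightarrow$ goal achieved) of the synthesis definition essentially by construction.
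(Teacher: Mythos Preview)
Your proposal is correct and follows essentially the same approach as the paper's proof: in each direction you transfer the very same strategy $g:(2^{\X})^+\rightarrow 2^{\Y}$ between the synthesis problem and the \DFA game, and then split on whether the environment trace is $\alpha$-fair, invoking the language equivalence between $\phi$ and $\G_\phi$ for the Reachability/acceptance step. Your explicit Observations~1 and~2 and the index bookkeeping (prefix $\rho^k$ versus state $s_{i_{k+1}}$, and the $j=0$ initial-state corner case) make precise points that the paper's proof leaves implicit, but the overall structure is the same.
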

\begin{proof}
	We prove the theorem in both directions.
	
	\noindent
	$\leftarrow: $ 
	Since $\langle \G_\phi, \alpha \rangle$ is realizable for the agent, the initial state $s_0$ is an agent winning state with winning strategy $g: (2^{\X})^+ \rightarrow 2^{\Y}$. Therefore, a play $\rho = (s_0, X_0\cup g(X_0)), (s_1, X_1\cup g(X_0,X_1)), \ldots$ over $\G_\phi$ from $s_0$ following $g$ is a winning play for the agent. Moreover, for every such play $\rho$ from $s_0$, either of the following conditions holds:
	
	\noindent
	$\bullet~\rho \nvDash GF\alpha$ such that $\rho$ is not $\alpha$-fair.\\
	$\bullet~\rho \models GF\alpha$ such that $\rho$ is $\alpha$-fair. Since $\rho$ is winning for the agent, there exists $j \geq 0$ such that $s_j \in Acc$. This implies that $\rho^j \models \phi$ holds, where $ \rho^j = (s_0, X_0\cup g(X_0)), (s_1, X_1\cup g(X_0,X_1)), \ldots , (s_j, X_j\cup g(X_0,X_1,\ldots,X_{j}))$.

	Consequently, the strategy $g$ assures that for an arbitrary environment trace $\lambda = X_0,X_1,\ldots\in (2^{\X})^{\omega}$, if $\lambda$ is $\alpha$-fair, then there is $j\geq 0$ such that $\phi$ is $true$ in the finite trace $\rho^j$. We conclude that $\langle \X, \Y, \alpha, \phi \rangle$ is realizable.
	
	\noindent
	$\rightarrow: $ For this direction, we assume that $\langle \X, \Y, \alpha, \phi \rangle$ is realizable, then there exists a strategy $g: (2^{\X})^+ \rightarrow 2^{\Y}$ that realizes $\phi$. Thus consider an arbitrary environment trace $\lambda \in (2^\X)^\omega$, either of the following conditions holds:
	
	\noindent
	$\bullet~\lambda$ is not $\alpha$-fair, then the induced play $\rho = (s_0, X_0\cup g(X_0)), (s_1, X_1\cup g(X_0,X_1)), \ldots$ over $\G_\phi$ from $s_0$ that follows $g$ is winning for the agent by default.\\
	$\bullet~\lambda$ is $\alpha$-fair, then on the induced play $\rho = (s_0, X_0\cup g(X_0)), (s_1, X_1\cup g(X_0,X_1)), \ldots$ over $\G_\phi$ from $s_0$, there exists $j \geq 0$ such that $\phi$ is $true$ in the finite trace $\rho^j = (s_0, X_0\cup g(X_0)), (s_1, X_1\cup g(X_0,X_1)), \ldots, (s_j, X_j\cup g(X_0,X_1,\ldots,X_{j}))$, in which case $s_j \in Acc$. Therefore, $\rho$ is winning for the agent.
		
	Consequently, we conclude that fair \DFA game $\langle \G_\phi, \alpha \rangle$ is realizable for the agent. 
\end{proof}

\subsection{Fair \DFA Game Solving}
Winning fair \DFA games means that the agent can eventually reach an ``agent wins" region from which if the constraint $\alpha$ holds, then it is possible to reach an accepting state. Given a fair \DFA game $\langle \G, \alpha \rangle$, we proceed as follows: (1) Compute ``agent wins'' region in fair \DFA game $\langle \G, \alpha \rangle$; (2) Check realizability; (3) Return an agent winning strategy if realizable.

Since the environment winning condition is more intuitive, in order to show the solution to fair \DFA game, we start by solving the \emph{Recurrence-Safety} game, which considers the environment as the protagonist. The idea for winning such game is that the environment should remain in an ``environment wins" region from which the constraint $\alpha$ holds infinitely often referring to \emph{Recurrence} game, meanwhile the accepting states are forever avoidable referring to \emph{Safety} game. Therefore, in order to have both of \emph{Recurrence} such that having $GF \alpha$ holds and \emph{Safety} such that avoiding accepting states $s \in Acc$, the ``environment wins" region computation is defined as:  

\begin{centering}
	$Env_{f} = \nu Z.\mu \hat{Z}.(\exists X. \forall Y. ((X \models \alpha \wedge \delta(s, X  \cup Y) \in Z\backslash Acc) \vee \delta(s, X \cup Y) \in \hat{Z}\backslash Acc))$,
\end{centering}

where $X$ ranges over  $2^\X$ and $Y$ over $2^\Y$. 

The fixpoint stages for $Z$~(note $Z_{i+1} \subseteq Z_i$, for $i \geq 0$, by monotonicity) are:

\noindent
$\bullet~Z_0 = S$,\\ 
$\bullet~Z_{i+1} = \mu \hat{Z}.(\exists X. \forall Y. ((X \models \alpha \wedge 
\delta(s, X \cup Y) \in Z_i \backslash Acc) \vee \delta(s, X \cup Y) \in \hat{Z} \backslash Acc))$.

Eventually, $Env_{f} = Z_k$ for some $k$ such that $Z_{k+1} = Z_{k}$.

The fixpoint stages for $\hat{Z}$ with respect to $Z_i$~(note $\hat{Z}_j \subseteq \hat{Z}_{j+1}$, for $j \geq 0$, by monotonicity) are:

\noindent 
$\bullet~\hat{Z}_{i,0} = \emptyset$,\\
$\bullet~\hat{Z}_{i,j+1} = \exists X. \forall Y. ((X \models \alpha \wedge 
\delta(s, X \cup Y) \in Z_i \backslash Acc) \vee \delta(s, X \cup Y) \in \hat{Z}_{i,j} \backslash Acc)$.

Finally, $\hat{Z}_i = \hat{Z}_{i,k}$ for some $k$  such that $\hat{Z}_{i,k+1} = \hat{Z}_{i,k}$.

The following theorem assures that the nested fixpoint computation of $Env_{f}$ collects exactly all environment winning states in fair \DFA game.
\begin{theorem}
	For a fair \DFA game $\langle \G, \alpha \rangle$ and a state $s \in S$, we have $s \in Env_{f}$ iff $s$ is an environment winning state.
\end{theorem}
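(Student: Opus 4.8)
The plan is to establish the two fixpoint--game correspondences directly, keeping the environment as the protagonist. First I would record the abstract shape of the computation: writing $\Phi(Z,\hat{Z})$ for the one-step operator inside the definition of $Env_f$ and $G(Z)=\mu\hat{Z}.\Phi(Z,\hat{Z})$ for the inner least fixpoint, both $\Phi(Z,\cdot)$ and $G$ are monotone on the finite lattice $2^{S}$, so all fixpoints exist and are obtained as the stabilised stage sequences $Z_i$ and $\hat{Z}_{i,j}$ already displayed, with $Env_f=\bigcap_i Z_i$. Intuitively $G(Z)$ is the set of states from which the environment can force, in finitely many rounds and keeping every successor out of $Acc$, a round where $\alpha$ holds and the play re-enters $Z\setminus Acc$; the outer greatest fixpoint then demands that this ``reach an $\alpha$-round back in the region'' be repeatable forever, which is exactly \emph{Recurrence} together with \emph{Safety}. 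One bookkeeping point I would settle first concerns safety at the initial state: the $\setminus Acc$ constraints in $\Phi$ sit on successors, so every state reached after the first move is automatically outside $Acc$, and the only extra requirement for $s$ itself is $s\notin Acc$ (which every environment-winning state meets by \emph{Safety} at step $0$). I would therefore confirm, or enforce by intersecting with $S\setminus Acc$, that the membership region excludes $Acc$, so the two regions coincide exactly.

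For the direction $s\in Env_f\Rightarrow s$ environment-winning, I would build an environment strategy from a rank function. Since $Env_f=Z_k$ is a fixpoint, $Env_f=G(Env_f)=\bigcup_j\hat{Z}_{k,j}$, so each $s\in Env_f$ has a finite rank $r(s)=\min\{j:s\in\hat{Z}_{k,j}\}$. At a state of rank $j\ge 1$, membership in $\hat{Z}_{k,j}=\Phi(Env_f,\hat{Z}_{k,j-1})$ supplies a witnessing $X$ such that, for every agent response $Y$, either $X\models\alpha$ and the successor lands in $Env_f\setminus Acc$, or the successor lands in $\hat{Z}_{k,j-1}\setminus Acc$ and hence has strictly smaller rank. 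The environment plays this $X$. Along any resulting play, whenever the witness has $X\not\models\alpha$ the rank strictly drops, so it cannot stay on the non-$\alpha$ branch forever; the environment is thus forced, after finitely many rounds, through a round where $X\models\alpha$ and the play resets into $Env_f\setminus Acc$. Iterating, $\alpha$ holds infinitely often (\emph{Recurrence}) and every successor stays in $Env_f\setminus Acc$ (\emph{Safety}), so $s$ is environment-winning.

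For the converse, let $W$ be the set of environment-winning states; I would show $W$ is a post-fixpoint of $G$, i.e. $W\subseteq G(W)=\mu\hat{Z}.\Phi(W,\hat{Z})$, whence Knaster--Tarski gives $W\subseteq\nu Z.G(Z)=Env_f$. Suppose not and pick $s\in W\setminus G(W)$. Unfolding $s\notin\Phi(W,G(W))$ yields, against \emph{every} environment move $X$, an agent response $Y$ with $\delta(s,X\cup Y)\notin G(W)\setminus Acc$ and with $X\not\models\alpha$ or $\delta(s,X\cup Y)\notin W\setminus Acc$. I would run the environment's winning strategy $h$ from $s$ against these adversarial responses: \emph{Safety} of $h$ forces the successor out of $Acc$, and since the winning condition is suffix-closed, following $h$ keeps the successor in $W$; so it lies in $W\setminus Acc$, which in turn forces $X\not\models\alpha$ this round and, being outside $Acc$, also outside $G(W)$. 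The successor is thus again in $W\setminus G(W)$, and by induction the agent keeps the play in $W\setminus G(W)$ forever while $\alpha$ never holds --- contradicting that $h$ guarantees \emph{Recurrence}. Hence $W\setminus G(W)=\emptyset$.

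The main obstacle is the least-fixpoint/\emph{Recurrence} matching rather than the greatest-fixpoint/\emph{Safety} part. In the forward direction it appears as the need for a genuine well-foundedness (rank-decrease) argument to guarantee that the environment is \emph{compelled} to hit an $\alpha$-round, and in the converse as the contradiction above, where the delicate step is extracting, from non-membership in a least fixpoint, an agent strategy that simultaneously stays inside $W$ (using that $W$ is winning-closed and safe) and suppresses $\alpha$ forever; this is precisely where the ``$\forall Y$'' quantifier and the two-player interaction must be handled with care. I would also keep the initial-state safety bookkeeping noted above in view, so that the membership region and the winning region agree on $Acc$-states as well.
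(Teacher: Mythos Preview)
Your plan is sound and would yield a correct proof. In spirit it matches the paper's argument---both directions rest on unfolding the nested fixpoint and reading off the Recurrence/Safety content---but your route is considerably more explicit than what the paper actually writes. The paper's proof is a short sketch: for $s\in Env_f\Rightarrow$ winning it simply asserts that ``no matter what the (agent) strategy $g$ is, traces from $s$ satisfy either of the following conditions'' and declares $s$ environment-winning, without constructing an environment strategy or invoking any rank/well-foundedness argument; for the converse it argues the contrapositive in the same informal style. Your use of a rank function on the inner $\mu$-approximants to \emph{build} an environment strategy, and of the Knaster--Tarski post-fixpoint argument (showing $W\subseteq G(W)$ by contradiction, playing the agent's spoiling response against the environment's putative winning strategy) makes precisely those steps rigorous that the paper leaves implicit. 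What this buys you is an honest treatment of the ``$\forall Y$'' interaction and of why \emph{Recurrence} is actually enforced, which is where the paper's sketch is thinnest.

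Your bookkeeping remark about $Acc$-states is also well taken: the $\setminus Acc$ guards in $\Phi$ constrain only successors, so nothing in the fixpoint definition by itself excludes a state $s\in Acc$ from $Env_f$, yet such a state is automatically agent-winning (Reachability at step $0$) and hence not environment-winning. The paper does not comment on this; your proposed fix of intersecting with $S\setminus Acc$ (or checking that the region already lies there) is the right way to make the statement literally true.
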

\begin{proof}
	We prove the two directions separately.
	
	\noindent
	$\leftarrow:$ We prove by showing the contrapositive. If a state $s \notin Env_{f}$, then $s$ must be removed from $Env_{f}$ at stage $i+1$, therefore, $s \in Z_i\backslash Z_{i+1}$. Then $s \notin \mu \hat{Z}.(\exists X. \forall Y. ((X \models \alpha \wedge 
	\delta(s, X \cup Y) \in Z \backslash Acc) \vee \delta(s, X \cup Y) \in \hat{Z} \backslash Acc))$. That is, no matter what the~(environment)~strategy $h$ is, traces from $s$ satisfy \textbf{neither} of the following conditions:
	
	\noindent
	$\bullet~\alpha$ holds and the trace gets trapped in $Z$ without visiting accepting states such that $X \models \alpha \wedge 
	\delta(s, X \cup Y) \in Z \backslash Acc$ holds, in which case $s$ is a new environment winning state;\\
	$\bullet~\alpha$ eventually gets hold and from there we can have $\alpha$ as true infinitely often without visiting accepting states such that $\delta(s, X \cup Y) \in \hat{Z} \backslash Acc$ holds, in which case $s$ is a new environment winning state.
	
	Therefore, $s$ is not an environment winning state. So if $s$ is an environment winning state then $s \in Env_{f}$ holds.
	
	\noindent
	$\rightarrow:$ If a state $s \in Env_{f}$, then $s \in \mu \hat{Z}.(\exists X. \forall Y. ((X \models \alpha \wedge 
	\delta(s, X \cup Y) \in Z \backslash Acc) \vee \delta(s, X \cup Y) \in \hat{Z} \backslash Acc))$. That is, no matter what the~(agent)~strategy $g$ is, traces from $s$ satisfy either of the following conditions:
	
	\noindent
	$\bullet~\alpha$ holds and the trace gets trapped in $Z$ without visiting accepting states such that $X \models \alpha \wedge 
	\delta(s, X \cup Y) \in Z \backslash Acc$ holds, in which case $s$ is a new environment winning state;\\
	$\bullet~\alpha$ eventually gets hold and from there we can have $\alpha$ as true infinitely often without visiting accepting states such that $\delta(s, X \cup Y) \in \hat{Z} \backslash Acc$ holds, in which case $s$ is a new environment winning state.
	
	Thus $s$ is a winning state for the environment. 
\end{proof}

Due to the determinacy of fair \DFA game, the set of agent winning states $Sys_{f}$ can be computed by negating $Env_f$:

\begin{centering}
	$Sys_{f} = \mu Z.\nu \hat{Z}.(\forall X. \exists Y. ((X \models \neg \alpha \vee
	\delta(s, X \cup Y) \in Z \cup Acc) \wedge \delta(s, X \cup Y) \in \hat{Z} \cup Acc))$.
\end{centering}

\begin{theorem}\label{thm:fair_winning_states}
	A fair \DFA game $\langle \G, \alpha \rangle$ has an agent winning strategy if and only if $s_0 \in Sys_{f}$.
\end{theorem}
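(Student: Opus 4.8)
The plan is to reduce this statement to the preceding theorem (characterizing $Env_{f}$) together with the determinacy of fair \DFA games. By Definition~\ref{def:win_state}, the game has an agent winning strategy exactly when $s_0$ is an agent winning state, so it suffices to show that $s_0 \in Sys_{f}$ iff $s_0$ is an agent winning state. In fact I would prove the stronger set-level identity $Sys_{f} = S \setminus Env_{f}$, from which the claim about $s_0$ is an immediate instantiation.

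The key step is a purely algebraic one: verifying that the fixpoint expression defining $Sys_{f}$ is the De Morgan dual of the one defining $Env_{f}$, so that $Sys_{f} = \neg Env_{f}$. I would apply the standard fixpoint dualities $\neg(\nu Z.\,F(Z)) = \mu Z.\,\neg F(\neg Z)$ and $\neg(\mu \hat{Z}.\,G(\hat{Z})) = \nu \hat{Z}.\,\neg G(\neg \hat{Z})$, pushing the complementation inward through both nested fixpoint operators. Under this process the outer quantifier block $\exists X.\forall Y$ dualizes to $\forall X.\exists Y$, the matrix disjunction becomes a conjunction, and each membership test negates: after $Z$ (resp.\ $\hat{Z}$) has been replaced by its complement, the set $Z \backslash Acc$ complements to $Z \cup Acc$, so the clause $\delta(s, X \cup Y) \in Z \backslash Acc$ turns into $\delta(s, X \cup Y) \in Z \cup Acc$, the clause $\delta(s, X \cup Y) \in \hat{Z} \backslash Acc$ turns into $\delta(s, X \cup Y) \in \hat{Z} \cup Acc$, and $X \models \alpha$ becomes $X \models \neg \alpha$. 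Carrying this out term by term reproduces exactly the stated formula for $Sys_{f}$, establishing $Sys_{f} = S \setminus Env_{f}$.

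With this identity in hand the conclusion follows immediately. The preceding theorem gives, for every $s \in S$, that $s \in Env_{f}$ iff $s$ is an environment winning state; hence $s \in Sys_{f}$ iff $s$ is \emph{not} an environment winning state. By the determinacy of fair \DFA games~\cite{Mar75}, a state fails to be an environment winning state exactly when it is an agent winning state, so $s \in Sys_{f}$ iff $s$ is an agent winning state. Instantiating at $s = s_0$ and recalling that realizability for the agent means the existence of a winning strategy from $s_0$, we obtain that the game has an agent winning strategy iff $s_0 \in Sys_{f}$.

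The main obstacle is the bookkeeping in the dualization step: one must track carefully how the set subtractions $\backslash Acc$ interact with complementation of the fixpoint variables, and confirm that the accepting states migrate from being excluded (in $Env_{f}$, via $\backslash Acc$) to being absorbed (in $Sys_{f}$, via $\cup Acc$) in precisely the way written. Everything else---the appeal to the previous theorem and the invocation of determinacy---is routine.
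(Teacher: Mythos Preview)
Your proposal is correct and follows essentially the same approach as the paper: establish $Sys_{f} = S \setminus Env_{f}$ by noting that the defining fixpoint expression for $Sys_{f}$ is the De Morgan dual of that for $Env_{f}$, then combine the preceding theorem with determinacy and instantiate at $s_0$. If anything, you spell out the dualization bookkeeping more carefully than the paper, which simply asserts that ``$Sys_{f}$ is the dual formula of $Env_{f}$'' without unpacking the $\nu/\mu$ swap, the quantifier flip, and the $\backslash Acc \leadsto \cup Acc$ transformation.
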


\subsection{Strategy Extraction}
Having completed the realizability checking procedure, this section deals with the agent winning strategy generation if $\langle \G, \alpha \rangle$ is realizable. It is known that if some strategy that realizes $\phi$ exists, then there also exists a \emph{finite-state strategy} generated by a finite-state \emph{transducer} that realizes $\phi$~\cite{Buchi1990}. Formally,  the agent \emph{winning strategy} $g : (2^\X)^+ \rightarrow 2^\Y$ can be represented as a deterministic finite transducer based on the set $Sys_{f}$, described as below.

\begin{definition}[Deterministic Finite Transducer]
	Given a fair \DFA game $\langle \G, \alpha \rangle$, where $\G = (2^{\X \cup \Y}, S, s_0, \delta, Acc)$, a deterministic finite transducer $\T = (2^\X, 2^\Y, Q, s_0, \varrho, \omega_f)$ of such game is defined as follows: 
	
	\noindent
	$\bullet$~$Q \subseteq S$ is the set of agent winning states s.t. $Q = Sys_{f}$;\\
	$\bullet$~$\varrho : Q \times 2^\X \rightarrow Q$ is the transition function such that $\varrho(q,X) = \delta(q, X\cup Y)$ and $Y = \omega_f(q, X)$;\\
	$\bullet$~$\omega_f: Q \times 2^\X \rightarrow 2^\Y$ is the output function such that at an agent winning state $q$ with assignment $X$, $\omega_f(q,X)$ returns an assignment $Y$ leading to an agent winning play.
\end{definition}

The transducer $\T$ generates $g$ in the sense that for every $\lambda \in (2^\X)^\omega$, we have $g(\lambda) = \omega_f(\varrho(\lambda))$, with the usual extension of $\delta$ to words over $2^\X$ from $s_0$. Note that there are many possible choices for the output function $\omega_f$. The transducer $\T$ defines a winning strategy by restricting $\omega_f$ to return only one possible setting of $\Y$.

We extract the output function $\omega_f: Q \times 2^\X \rightarrow 2^\Y$ for the game from the approximates for $Z$ assuming $\hat{Z}$ to be $Sys_{f}$, from where no matter what the environment strategy is, traces have to always get $\neg \alpha$ hold. Thus, we consider:
\begin{centering}
	$\mu Z.(\forall X. \exists Y.((X \models \neg \alpha \vee \delta(s, X \cup Y ) \in Z \cup Acc) \wedge \delta(s, X \cup Y ) \in Sys_{f} \cup Acc))$
\end{centering}
with approximates defined as:

\noindent
$\bullet~Z_{0} = \emptyset$,\\
$\bullet~Z_{i+1} = \forall X. \exists Y.((X \models \neg \alpha \vee  \delta(s, X \cup Y ) \in Z_i \cup Acc ) \wedge \delta(s, X \cup Y ) \in Sys_{f} \cup Acc)$.

Define an output function $\omega_f: Sys_{f} \times 2^\X \rightarrow 2^\Y$ as follows: for $s \in Z_{i+1} \backslash Z_i$, for all possible values $X \in 2^\X$, set $Y$ to be such that $(X \models \neg \alpha \vee  \delta(s, X \cup Y ) \in Z_i \cup Acc) \wedge \delta(s, X \cup Y ) \in Sys_{f} \cup Acc$ holds for $s \notin Acc$. Consider a deterministic finite transducer $\T$ defined in the sense that constructing $\omega_f$ as described above, the following theorem guarantees that $\T$ generates an agent winning strategy $g$.

\begin{theorem}
	Strategy $g$ with $g(\lambda) = \omega_f(\varrho(\lambda))$ is a winning strategy for the agent.
\end{theorem}

\section{Solution to Stable \LTLf Synthesis}~\label{sec:stable}
Solving stable \LTLf synthesis problem $\langle \X, \Y, \alpha, \phi \rangle$ relies on solving the stable \DFA game $\langle \G_\phi, \alpha \rangle$, where $\G_\phi$ is the corresponding \DFA of $\phi$. The following theorem guarantees the correctness of such reduction.

\begin{theorem}
	Stable \LTLf synthesis problem $\langle \X, \Y, \alpha, \phi \rangle$ is realizable iff stable \DFA game $\langle \G_\phi, \alpha \rangle$ is realizable.
\end{theorem}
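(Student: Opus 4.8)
The plan is to mirror the proof of the analogous fair-synthesis theorem, since stable \LTLf synthesis and the stable \DFA game are defined by exactly parallel conditions, with the fairness assumption $GF\alpha$ replaced by the stability assumption $FG\alpha$, and the agent's ``winning by default'' escape clause replaced by the \emph{Recurrence} condition $GF(\neg\alpha)$. First I would fix the arena $\G_\phi$ and use throughout the fact that $\G_\phi$ accepts exactly the language of $\phi$, so that for any finite prefix $\rho^j$ we have $\rho^j \models \phi$ if and only if the state $s_j$ reached after reading $\rho^j$ lies in $Acc$. This equivalence is the bridge between the \emph{Reachability} condition of the game and satisfaction of the \LTLf goal, and it is what makes the two notions of realizability coincide.

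For the direction $\leftarrow$, I would assume $\langle \G_\phi, \alpha \rangle$ is realizable, so $s_0$ is an agent winning state with winning strategy $g \colon (2^\X)^+ \to 2^\Y$. Every play $\rho$ from $s_0$ following $g$ is then winning for the agent, hence satisfies \emph{Recurrence} ($\rho \models GF(\neg\alpha)$, i.e.\ $\rho$ is not $\alpha$-stable) or \emph{Reachability} ($s_j \in Acc$ for some $j$). The key case split is on the environment trace $\lambda$: if $\lambda$ is $\alpha$-stable, then $\rho$ is $\alpha$-stable, so the \emph{Recurrence} disjunct fails and \emph{Reachability} must hold; picking the witnessing $j$ gives $s_j \in Acc$, hence $\rho^j \models \phi$. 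Thus $g$ witnesses $\alpha$-stable realizability of $\phi$.

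For the direction $\rightarrow$, I would assume $\phi$ is $\alpha$-stable realizable via a strategy $g$, and check that the induced play from $s_0$ is agent-winning for every environment trace $\lambda$. If $\lambda$ is not $\alpha$-stable then $\lambda \models GF(\neg\alpha)$, so the induced play satisfies \emph{Recurrence} and is winning by default. If $\lambda$ is $\alpha$-stable, realizability yields some $k$ with $\rho^k \models \phi$, so $s_k \in Acc$ by the acceptance equivalence above, satisfying \emph{Reachability}. In either case the play is winning for the agent, so $s_0$ is an agent winning state and the game is realizable.

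I do not expect a genuine obstacle here: the proof is a routine adaptation of the fair case, and the two games were deliberately set up so that the ``assumption fails'' case falls to the agent. The only point requiring care is to use the stability/recurrence labels consistently---concretely, to invoke the duality already noted after the definition of fair and stable traces, namely that ``$\rho$ is not $\alpha$-stable'' is equivalent to $\rho \models GF(\neg\alpha)$---so that the escape clause of the synthesis definition aligns exactly with the \emph{Recurrence} disjunct of the agent's winning condition in the stable \DFA game.
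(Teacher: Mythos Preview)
Your proposal is correct and follows essentially the same approach as the paper's own proof: both directions proceed by a case split on whether the environment trace is $\alpha$-stable, using the equivalence between $\rho^j \models \phi$ and $s_j \in Acc$ to pass between the \emph{Reachability} clause of the game and the synthesis definition. The only cosmetic difference is that you make the bridge via $\G_\phi$'s language explicit up front, whereas the paper uses it implicitly inside each bullet.
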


\begin{proof}
	We prove the theorem in both directions.
	
	\noindent
	$\leftarrow: $ 
	Since $\langle \G_\phi, \alpha \rangle$ is realizable for the agent, the initial state $s_0$ is an agent winning state with winning strategy $g: (2^{\X})^+ \rightarrow 2^{\Y}$. Therefore, a play $\rho = (s_0, X_0\cup g(X_0)), (s_1, X_1\cup g(X_0,X_1)), \ldots$ over $\G_\phi$ from $s_0$ following $g$ is a winning play for the agent. Moreover, for every such play $\rho$ from $s$, either of the following conditions holds:
	
	\noindent
	$\bullet~\rho \nvDash FG\alpha$ such that $\rho$ is not $\alpha$-stable.\\
	$\bullet~\rho \models FG\alpha$ such that $\rho$ is $\alpha$-stable. Since $\rho$ is winning for the agent, there exists $j \geq 0$ such that $s_j \in Acc$. Therefore, $\rho^j \models \phi$ holds, where $ \rho^j = (s_0, X_0\cup g(X_0)), (s_1, X_1\cup g(X_0,X_1)), \ldots , (s_j, X_j\cup g(X_0,X_1,\ldots,X_{j}))$.
	
	Consequently, the strategy $g$ assures that for an arbitrary environment trace $\lambda = X_0,X_1,\ldots\in (2^{\X})^{\omega}$, if $\lambda$ is $\alpha$-stable, then there exists $j\geq 0$ such that $\phi$ is $true$ in finite trace $\rho^j$. Thus $\langle \X, \Y, \alpha, \phi \rangle$ is realizable.
	
	\noindent
	$\rightarrow: $ For this direction, we assume that $\langle \X, \Y, \alpha, \phi \rangle$ is realizable, then there exists a strategy $g: (2^{\X})^+ \rightarrow 2^{\Y}$ that realizes $\phi$. Thus consider an arbitrary environment trace $\lambda \in (2^\X)^\omega$, either of the following conditions holds:
	
	\noindent
	$\bullet~\lambda$ is not $\alpha$-stable, then the induced play $\rho = (s_0, X_0\cup g(X_0)), (s_1, X_1\cup g(X_0,X_1)), \ldots$ over $\G_\phi$ from $s_0$ that follows $g$ is winning for the agent by default.\\
	$\bullet~\lambda$ is $\alpha$-stable, then on the induced play $\rho = (s_0, X_0\cup g(X_0)), (s_1, X_1\cup g(X_0,X_1)), \ldots$ over $\G_\phi$ from $s_0$, there exists $j \geq 0$ such that $\phi$ is $true$ in the finite trace $\rho^j = (s_0, X_0\cup g(X_0)), (s_1, X_1\cup g(X_0,X_1)), \ldots, (s_j, X_j\cup g(X_0,X_1,\ldots,X_{j}))$, in which case $s_j \in Acc$. Therefore, $\rho$ is winning for the agent.
	
	Consequently, we conclude that stable \DFA game $\langle \G_\phi, \alpha \rangle$ is realizable for the agent. 
\end{proof}

\subsection{Stable \DFA Game Solving}
Despite the duality between fairness and stability, solving the stable \DFA game here cannot directly dualize the solution to fair \DFA game. This is because the computation here involves a \emph{Stability-Safety} game, which is not dual to the \emph{Recurrence-Safety} game in fair \DFA game solving. In order to deal with stable \DFA game, we again first consider the environment as the protagonist. We compute the set of environment winning states as follows:

\begin{centering}
	$Env_{st}= \mu Z.\nu \hat{Z}.(\exists X. \forall Y. ( (X \models \alpha  \wedge 
	\delta(s, X  \cup Y) \in \hat{Z} \backslash Acc) \vee \delta(s, X \cup Y) \in Z\backslash Acc))$,
\end{centering}

where $X$ ranges over  $2^\X$ and $Y$ over $2^\Y$. 

The following theorem assures that the nested fixpoint computation of $Env_{st}$ collects exactly all environment winning states in stable \DFA game.

\begin{theorem}
	For a stable \DFA game $\langle \G, \alpha \rangle$ and a state $s \in S$, we have $s \in Env_{st}$ iff $s$ is an environment winning state.
\end{theorem}

Correspondingly, since stable \DFA game is determined, the set of agent winning states can be computed as follows:

\begin{centering}
	$Sys_{st} = \nu Z. \mu \hat{Z}.(\forall X. \exists Y. ((X \models \neg \alpha \vee 
	\delta(s, X \cup Y) \in \hat{Z} \cup Acc) \wedge \delta(s, X \cup Y) \in Z \cup Acc))$.
\end{centering}

\begin{theorem}\label{thm:stable_winning_states}
	A stable \DFA game $\langle \G, \alpha \rangle$ has an agent winning strategy if and only if $s_0 \in Sys_{st}$.
\end{theorem}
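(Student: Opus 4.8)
The plan is to derive this theorem from the preceding characterization of $Env_{st}$ together with the determinacy of stable \DFA games, by showing that the fixpoint defining $Sys_{st}$ is exactly the set-theoretic complement of the fixpoint defining $Env_{st}$. Since the stable \DFA game is determined~\cite{Mar75}, a state is an agent winning state if and only if it is \emph{not} an environment winning state; and by the preceding theorem the environment winning states are precisely the members of $Env_{st}$. Hence $s_0$ is an agent winning state iff $s_0 \in S \backslash Env_{st}$, and the entire task reduces to verifying the identity $Sys_{st} = S \backslash Env_{st}$. Granting this identity, $s_0 \in Sys_{st}$ iff $s_0$ is an agent winning state, which is the claim, so no further game-theoretic reasoning is needed once the complementation is in place.

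The core step is therefore a purely syntactic dualization of the nested fixpoint. I would complement $Env_{st} = \mu Z.\, \nu \hat{Z}.\, M(Z,\hat{Z})$ level by level using the standard De Morgan laws for fixpoints over the finite lattice $2^S$, namely $\overline{\mu Z.\, f(Z)} = \nu Z.\, \overline{f(\overline{Z})}$ and $\overline{\nu Z.\, f(Z)} = \mu Z.\, \overline{f(\overline{Z})}$. These apply because $Z$ and $\hat{Z}$ occur only positively, through the membership tests $\delta(s,X\cup Y)\in Z\backslash Acc$ and $\delta(s,X\cup Y)\in \hat{Z}\backslash Acc$, so the underlying operators are monotone and Knaster--Tarski guarantees the fixpoints exist. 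Pushing the complement inward flips $\mu \leftrightarrow \nu$ and, in the Boolean matrix, swaps $\exists X \leftrightarrow \forall X$ and $\forall Y \leftrightarrow \exists Y$, converts the disjunction into a conjunction, and replaces each guard by its negation: $X\models\alpha$ becomes $X\models\neg\alpha$, while the set conditions dualize via $\overline{\,\overline{W}\backslash Acc\,} = W\cup Acc$, so that $\delta(s,X\cup Y)\in\overline{\hat{Z}}\backslash Acc$ negates to $\delta(s,X\cup Y)\in\hat{Z}\cup Acc$ and $\delta(s,X\cup Y)\in\overline{Z}\backslash Acc$ negates to $\delta(s,X\cup Y)\in Z\cup Acc$. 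Carrying this out yields exactly $\nu Z.\,\mu\hat{Z}.\,(\forall X.\exists Y.((X\models\neg\alpha \vee \delta(s,X\cup Y)\in\hat{Z}\cup Acc)\wedge \delta(s,X\cup Y)\in Z\cup Acc))$, which is the definition of $Sys_{st}$.

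The main obstacle I anticipate is bookkeeping rather than conceptual: getting the quantifier alternation and the two membership guards to dualize consistently, and in particular tracking the two opposite roles of $Acc$, which is \emph{removed} in the environment expression via $\backslash Acc$ but \emph{added} in the agent expression via $\cup Acc$, precisely as the identity $\overline{W\backslash Acc}=\overline{W}\cup Acc$ dictates. In contrast to the fair case, no independent re-derivation of the agent's winning region is required here, since the dualization is sound over the finite state lattice and the complement identity $Sys_{st}=S\backslash Env_{st}$, combined with determinacy and the $Env_{st}$ theorem, closes the proof at once.
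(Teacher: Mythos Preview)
Your proposal is correct and follows essentially the same approach as the paper: both argue that $Sys_{st}$ is the complement of $Env_{st}$ by fixpoint dualization, invoke the preceding theorem characterizing $Env_{st}$ as the environment winning states, and conclude via determinacy that $s_0\in Sys_{st}$ iff $s_0$ is an agent winning state. Your version simply spells out the De Morgan/Knaster--Tarski bookkeeping that the paper compresses into the phrase ``$Sys_{st}$ is the dual formula of $Env_{st}$.''
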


\subsection{Strategy Extraction} Here, the agent \emph{winning strategy} $g : (2^\X)^+ \rightarrow 2^\Y$ can also be represented as a deterministic finite transducer $\T = (2^\X, 2^\Y, Q, s_0, \varrho, \omega_{st})$ in terms of the set of agent winning states such that $Q = Sys_{st}$.

We extract the output function $\omega_{st}: Q \times 2^\X \rightarrow 2^\Y$ for the game from the approximates for $Z$ assuming $\hat{Z}$ to be $Sys_{st}$, from where no matter what the environment strategy is, traces cannot always get $\alpha$ hold. Thus, we consider the fixpoint computation as follows:

\begin{centering}
	$\nu Z.(\forall X. \exists Y. ((X \models \neg \alpha \vee \delta(s, X \cup Y) \in Sys_{st} \cup Acc) \wedge \delta(s, X \cup Y) \in Z \cup Acc))$.
\end{centering}

Define an output function $\omega_{st}: Sys_{st} \times 2^\X \rightarrow 2^\Y$ s.t. for $s \in Z_{i+1} \cap Z_i$, for all possible values $X \in 2^\X$, set $Y$ to be s.t. $(X \models \neg \alpha \vee \delta(s, X \cup Y) \in Sys_{st} \cup Acc) \wedge \delta(s, X \cup Y) \in Z_i \cup Acc$ holds for $s \notin Acc$. The following theorem guarantees that $\T$ generates an agent winning strategy $g$.

\begin{theorem}
	Strategy $g$ with $g(\lambda) = \omega_{st}(\varrho(\lambda))$ is a winning strategy for the agent.
\end{theorem}

\section{Evaluation}
We observe that a straightforward approach to \LTLf synthesis under assumptions can be obtained by a reduction to standard \LTL synthesis, which allows us to utilize tools for \LTL synthesis to solve the fair~(or stable) \LTLf synthesis problem. In this section, we first revisit the reduction to standard \LTL synthesis, and then show an experimental comparison with the approach proposed earlier in this paper. 

\noindent\textbf{Reduction to \LTL Synthesis.}
The insight of reducing \LTLf synthesis under assumptions to \LTL synthesis comes from the reduction in~\cite{ZhuTLPV17} for general \LTLf synthesis, and in~\cite{CamachoBM18} for constraint \LTLf synthesis, where the constraint describes the desired environment behaviors, under which the goal is to satisfy the given \LTLf specification. Both reductions adopt the translation rules in~\cite{DV13} to polynomially transform an \LTLf formula $\phi$ over $\X \cup \Y$ into an \LTL formula $\psi$ over $\X \cup \Y \cup \{alive\}$, retaining the satisfiability equivalence, where proposition $alive$ indicates the last instance of the finite trace. Such translation bridges the gap between \LTLf over finite traces and \LTL over infinite traces. Based on the translation from \LTLf to \LTL, we then reduce fair~(resp., stable) \LTLf synthesis problem $\langle \X, \Y, \alpha, \phi \rangle$ to \LTL synthesis problem $\langle \X, \Y \cup \{alive\}, GF \alpha \rightarrow \psi \rangle$~(resp., $\langle \X, \Y \cup \{alive\}, FG \alpha \rightarrow \psi \rangle$). 

\noindent\textbf{Implementation.}
Based on the \LTLf synthesis tool~\emph{Syft}~\footnote{https://github.com/saffiepig/Syft}, we implemented our fixpoint-based techniques for solving fair \LTLf synthesis and stable \LTLf synthesis in two tools called \emph{FSyft} and \emph{StSyft}, respectively~(name after~\emph{Syft}). Both frameworks consist of two steps: the symbolic \DFA construction and the respective \DFA game solving. In the first step, we based on the code of~\emph{Syft}, to construct the symbolic \DFA represented in Binary Decision Diagrams~(BDDs). The implementation of the nested fixpoint computation for solving \DFA games over such symbolic \DFA, borrows techniques from~\cite{ZTLPV17} for greatest fixpoint computation and from~\emph{Syft} for least fixpoint computation. The construction of the transducer for generating the winning strategy utilizes the boolean-synthesis procedure introduced in~\cite{DrLu.cav16} for realizable formulas. The implementation makes use of the BDD library CUDD-3.0.0~\cite{cudd}. In order to evaluate the performance of \emph{FSyft} and \emph{StSyft}, we compared it against the solution of reducing to standard \LTL synthesis shown above. For such comparison, we employed the \LTLf-to-\LTL translator implemented in SPOT~\cite{spot} and chose Strix~\cite{MeyerSL18}, 
the winner of the synthesis competition SYNTCOMP 2019~\footnote{http://www.syntcomp.org/syntcomp-2019-results/} over \LTL synthesis track, as the baseline. 

\subsection{Experimental Methodology}

\subsubsection{Benchmarks.} We collected 1200 formulas consisting of two classes of benchmarks: 1000 randomly conjuncted \LTLf formulas over 100 basic cases, generated in the style described in~\cite{ZhuTLPV17}, the length of which, indicating the number of conjuncts, ranges form 1 to 5. The assumption~(either fairness or stability) is assigned by randomly selecting one variable from all environment variables; 200 \LTLf synthesis benchmarks with assumptions generated from a scalable counter game, described as follows:

\noindent
$\bullet$~There is an $n$-bit binary counter. At each round, the environment chooses whether to increment the counter or not. The agent can choose to grant the request or ignore it.\\
$\bullet$~The goal is to get the counter having all bits set to $1$, so the counter reaches the maximal value.\\
$\bullet$~The fairness assumption is to have the environment infinitely request the counter to be incremented.\\
$\bullet$~The stability assumption is to have the environment eventually keep requesting the counter to be incremented.


We reduce solving the counter game above to solving \LTLf synthesis with assumptions. First, we have $n$ agent variables $\{b_{n-1}, b_{n-2}, \ldots, b_0 \}$ denoting the value of $n$ counter bits. We also introduce another $n+1$ agent variables $\{c_{n}, c_{n-1}, \ldots, c_0\}$ representing the carry bits. In addition, we have an environment variable $add$ representing the environment making an increment request or not, and $c_0$ as $true$ is considered as the agent granting the request. We then formulate the counter game into \LTLf formula as follows: 
$Init = ((\neg c_0) \wedge \ldots \wedge (\neg c_{n-1}) \wedge (\neg b_0) \wedge  \ldots (\neg b_{n-1}))), \\
Goal = F(b_0 \wedge \ldots \wedge b_{n-1}), \\
B = G((\neg add)\rightarrow X_w(\neg c_0)), \\
B_i = \begin{cases}
 (((\neg c_i) \wedge (\neg b_i)) \rightarrow X_w((\neg b_i) \wedge (\neg c_{i+1})))\\
 (((\neg c_i) \wedge b_i) \rightarrow X_w(b_i \wedge (\neg c_{i+1})))\\
 (((c_i \wedge \neg b_i) \rightarrow X_w(b_i \wedge (\neg c_{i+1}))) \\
 (((c_i \wedge b_i) \rightarrow X_w((\neg b_i) \wedge c_{i+1}))).  \end{cases}$

The \LTLf formula $\phi$ is then $(Init \wedge B \wedge \bigwedge_{0 \leq i \leq n} G(B_i)) \wedge Goal$, and the constraint $\alpha$ is $add$. Obviously, such counter game only returns realizable cases, since a winning strategy for the agent is to grant all increment requests.

In order to get unrealizable cases, we can make some modifications on the counter game above. One possibility is to have the counter increment by 2 if the agent chooses to grant the request sent by the environment. Such modification leads to no winning strategy for the agent, since the maximal counter value of having each bit as $1$ is odd. However, incrementing by 2 at each time will never reach an odd value. Therefore, for bit $B_i$ such that $i > 0$, we keep the same formulation. While for bit $B_0$, we change as follows:
$B_0 = \begin{cases}
 (((\neg c_0) \wedge (\neg b_0)) \rightarrow X_w((\neg b_0) \wedge (\neg c_1)))\\
 (((\neg c_0) \wedge b_0) \rightarrow X_w(b_0 \wedge (\neg c_1)))\\
 (((c_0 \wedge \neg b_0) \rightarrow X_w(\neg b_0 \wedge (c_1))) \\
 (((c_0 \wedge b_0) \rightarrow X_w((b_0) \wedge c_1))).  \end{cases}$

Therefore, we have 200 counter game benchmarks in total, with the number of counter bits $n$ ranging from 1 to 100, and both realizable and unrealizable cases for each $n$.

\noindent\textbf{Experiment Setup.} All tests were ran on a computer cluster. Each test took an exclusive access to a node with Intel(R) Xeon(R) CPU E5-2650 v2 processors running at 2.60GHz. Time out
was set to 1000 seconds.

\noindent\textbf{Correctness.} Our implementation was verified by comparing
the results returned by \emph{FSyft} and \emph{StSyft} with those from Strix. No inconsistency encountered for the solved cases.
\subsection{Experimental Results.}
\begin{figure}[t]
	\centering
	\includegraphics[width=.85\columnwidth]{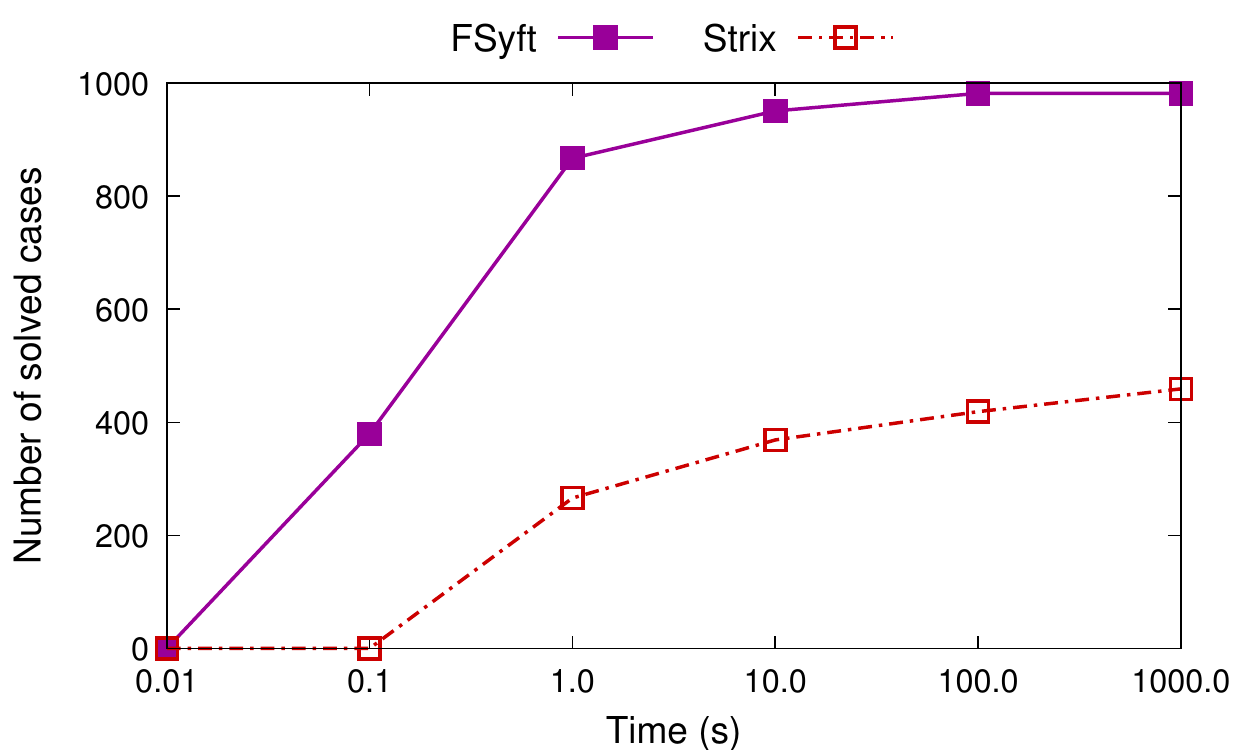}
	\caption{Fair \LTLf synthesis. Comparison of the number of solved cases with limited time between \emph{FSyft} and Strix over random conjunction benchmarks.}
	\label{fig:fsyntime}
\end{figure}

\begin{figure}[t]
	\centering
	\includegraphics[width=.85\columnwidth]{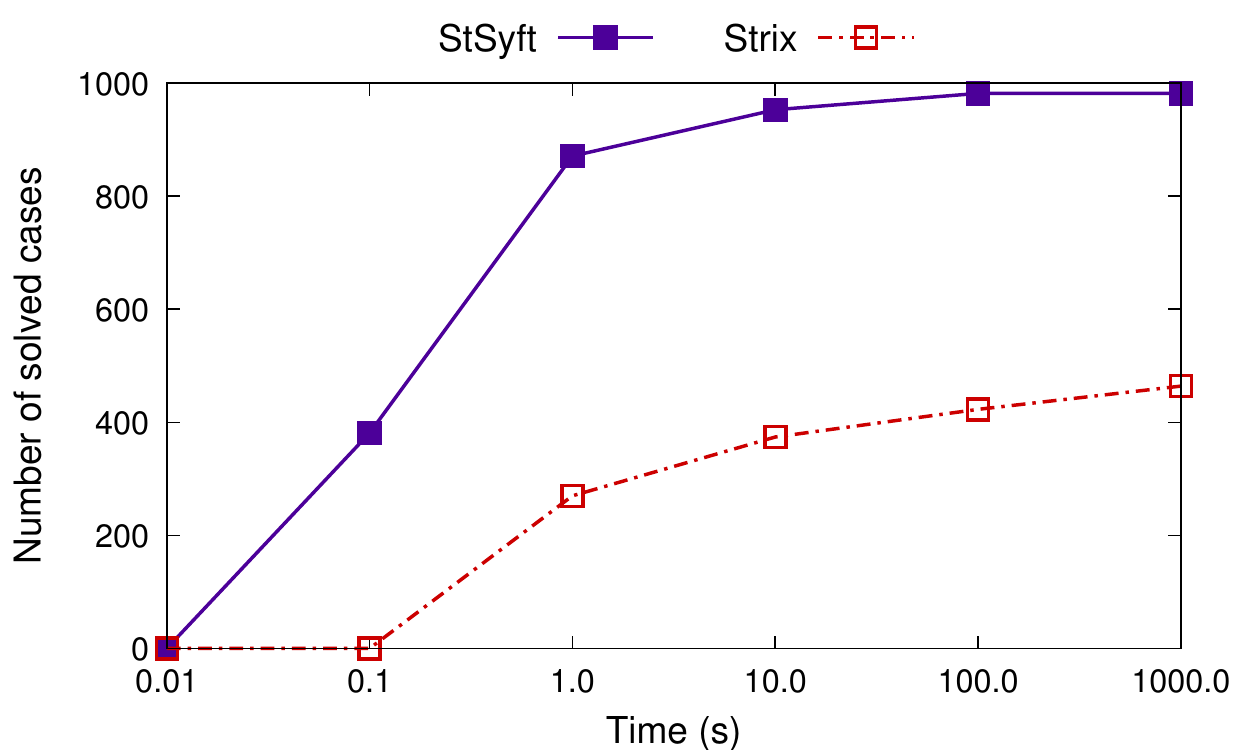}
	\caption{Stable \LTLf synthesis. Comparison of the number of solved cases with limited time between \emph{StSyft} and Strix over random conjunction benchmarks.}
	\label{fig:stsyntime}
\end{figure}
We evaluated the efficiency of \emph{FSyft} and \emph{StSyft} in terms of the number of solved cases and total time cost. We compared these two tools against Strix by performing an end-to-end comparison experiment. Therefore, both of the \DFA construction time and the fixpoint computation time were counted for \emph{FSyft} and \emph{StSyft}. For Strix, we counted the running time from feeding the corresponding \LTL formula to Strix to receiving the result. Both comparison on two classes of benchmarks show the advantage of the fixpoint-based technique proposed in this paper as an effective method for both of fair \LTLf synthesis and stable \LTLf synthesis~\footnote{We recommend viewing the figures online for a better vision.}.

\noindent\textbf{Randomly Conjuncted Benchmarks.}
Figure~\ref{fig:fsyntime} and Figure~\ref{fig:stsyntime} show the number of solved cases as the given time increases on fair \LTLf synthesis and stable \LTLf synthesis, respectively. As shown in the figures, both of \emph{FSyft} and \emph{StSyft} are able to handle almost all cases~(1000 in total for each), while Strix only solves a small fraction of the cases that \emph{FSyft} and \emph{StSyft} can solve. Moreover, as presented there, half of the cases that can be solved by \emph{FSyft} and \emph{StSyft}, around 400, are finished in less than 0.1 second, while Strix is unable to solve any cases given such time limit.

\noindent\textbf{Counter Game.}
Figure~\ref{fig:fcounter} and Figure~\ref{fig:stcounter} show the running time of all tools on the counter game benchmarks. Since all of them got failed on cases with counter bits $n>10$, here we only show realizable/unrealizale cases with counter bits $n \leq 10$, so we have 20 cases for each synthesis problem. The x-labels \emph{c-rea/unrea-n} indicate the realizability and the number of counter bits of each case. Both of \emph{FSyft} and \emph{StSyft} are able to deal with cases with $n\leq10$, while Strix only solves cases with $n$ up to 7, either stable \LTLf synthesis or fair \LTLf synthesis. For those common solved cases, both of \emph{FSyft} and \emph{StSyft} take much less time than Strix.

\begin{figure}[t]
	\centering
	\includegraphics[width=.9\columnwidth]{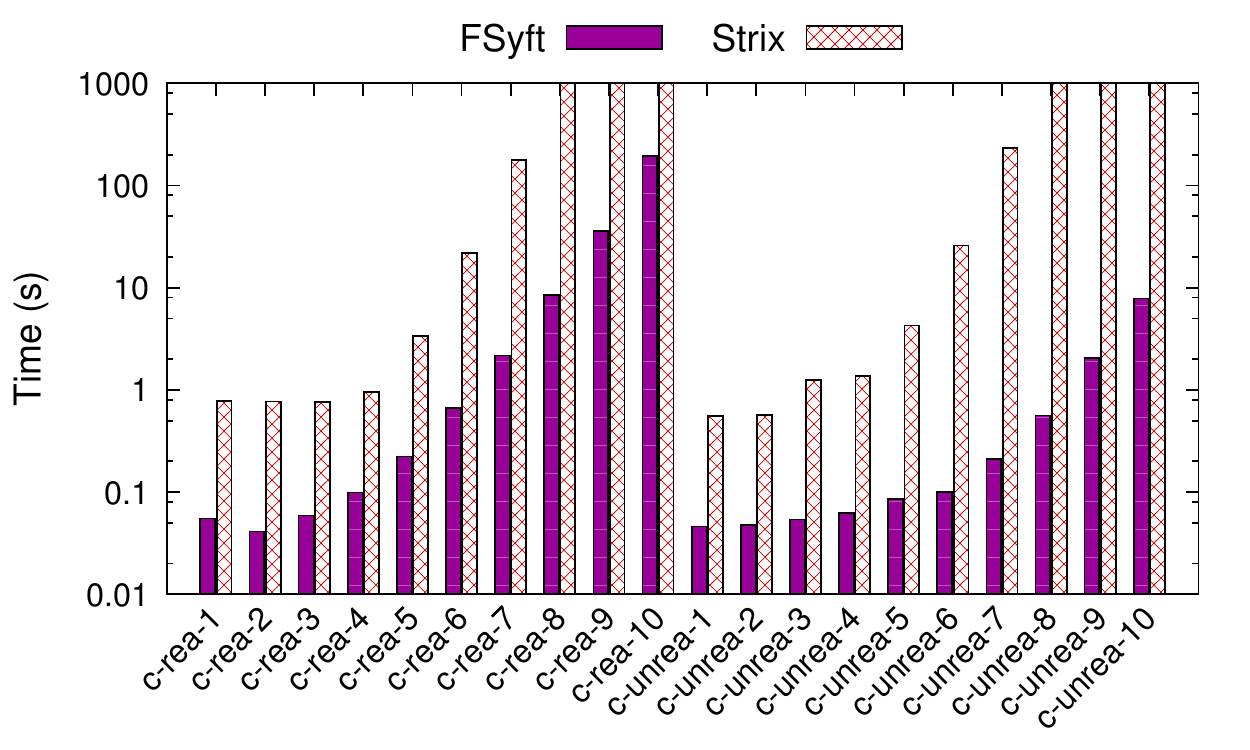}
	\caption{Fair \LTLf synthesis. Comparison of running time between \emph{FSyft} and Strix, in log scale. Bars of the maximum height indicate cases timed out.}
	\label{fig:fcounter}
\end{figure}

\begin{figure}[t]
	\centering
	\includegraphics[width=.9\columnwidth]{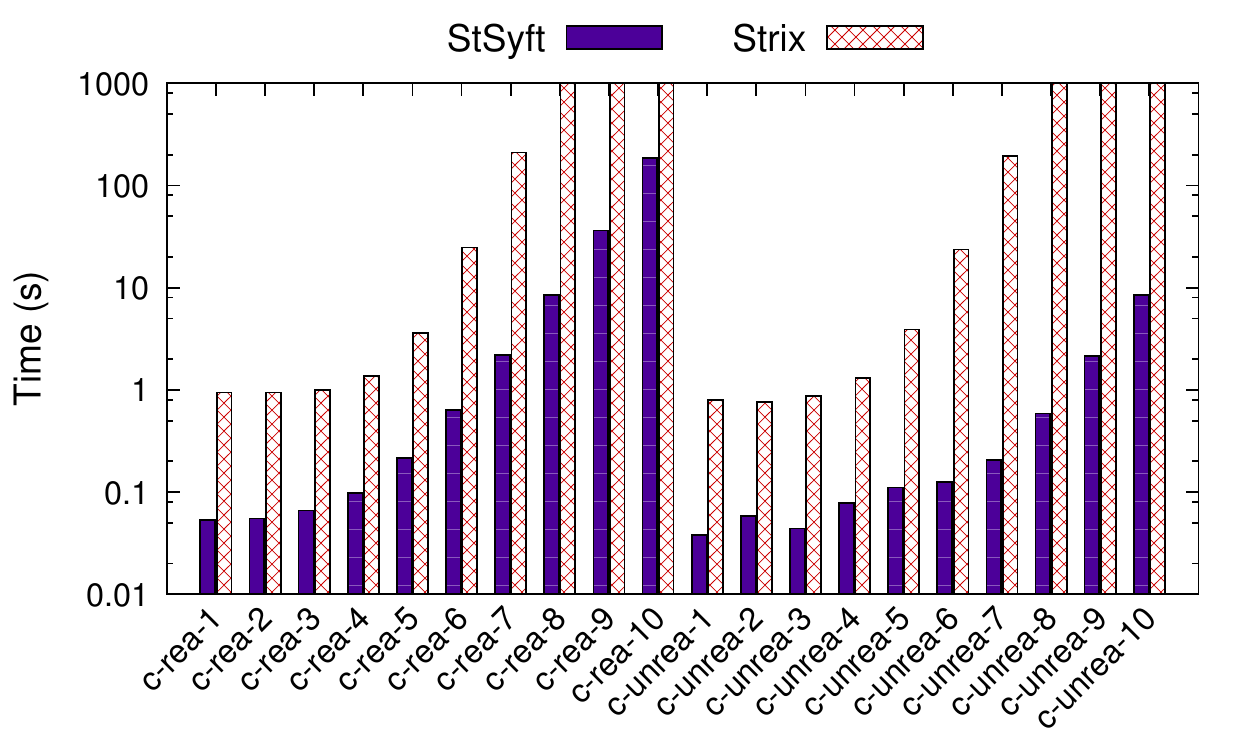}
	\caption{Stable \LTLf synthesis. Comparison of running time between \emph{StSyft} and Strix, in log scale. Bars of the maximum height indicate cases timed out.}
	\label{fig:stcounter}
\end{figure}

\section{Conclusions}
In this paper we presented a fixpoint-based technique for \LTLf synthesis with assumptions for basic forms of fairness and stability, which is quite effective, as our experiment shows. Our technique can be summarized as follows: use the \DFA for the \LTLf formula as the arena to play a game for the environment whose winning condition is to avoid reaching the accepting states while making the assumption true. Note that for a general \LTL assumption~(see~\cite{ADMRicaps19}), we can transform such an assumption into a parity automaton, take the Cartesian product with the \DFA and play the parity/reachability game over the resulting arena. Comparing this possible approach to the reduction to \LTL synthesis is a subject for future work.

\noindent\textbf{Acknowledgments.}
Work supported in part by European Research Council under the European Union’s Horizon 2020 Programme through the ERC Advanced Grant WhiteMec (No.\ 834228), NSF grants IIS-1527668, CCF-1704883, and IIS-1830549, NSFC Projects No.\ 61572197, No.\ 61632005 and No.\ 61532019. 

\bibliographystyle{aaai}
\bibliography{AAAI-ZhuS.3174}

\begin{thebibliography}{}

\bibitem[\protect\citeauthoryear{Aminof \bgroup et al\mbox.\egroup
  }{2018}]{AminofGMR18}
Aminof, B.; {De Giacomo}, G.; Murano, A.; and Rubin, S.
\newblock 2018.
\newblock {Synthesis under Assumptions}.
\newblock In {\em {KR}},  615--616.

\bibitem[\protect\citeauthoryear{Aminof \bgroup et al\mbox.\egroup
  }{2019}]{ADMRicaps19}
Aminof, B.; {De Giacomo}, G.; Murano, A.; and Rubin, S.
\newblock 2019.
\newblock {Planning under {LTL} Environment Specifications}.
\newblock In {\em {ICAPS}}.

\bibitem[\protect\citeauthoryear{Bloem \bgroup et al\mbox.\egroup
  }{2012}]{BloemJPPS12}
Bloem, R.; Jobstmann, B.; Piterman, N.; Pnueli, A.; and Sa'ar, Y.
\newblock 2012.
\newblock {Synthesis of Reactive(1) Designs}.
\newblock {\em J. Comput. Syst. Sci.} 78(3):911--938.

\bibitem[\protect\citeauthoryear{Bloem, Ehlers, and
  K{\"{o}}nighofer}{2015}]{BloemEK15}
Bloem, R.; Ehlers, R.; and K{\"{o}}nighofer, R.
\newblock 2015.
\newblock {Cooperative Reactive Synthesis}.
\newblock In {\em {ATVA}}, volume 9364 of {\em Lecture Notes in Computer
  Science},  394--410.
\newblock Springer.

\bibitem[\protect\citeauthoryear{Brenguier, Raskin, and
  Sankur}{2017}]{BrenguierRS17}
Brenguier, R.; Raskin, J.; and Sankur, O.
\newblock 2017.
\newblock Assume-admissible synthesis.
\newblock {\em Acta Inf.} 54(1):41--83.

\bibitem[\protect\citeauthoryear{Buchi and Landweber}{1990}]{Buchi1990}
Buchi, J.~R., and Landweber, L.~H.
\newblock 1990.
\newblock {\em {Solving Sequential Conditions by Finite-State Strategies}}.

\bibitem[\protect\citeauthoryear{Camacho \bgroup et al\mbox.\egroup
  }{2017}]{CTMBM17}
Camacho, A.; Triantafillou, E.; Muise, C.; Baier, J.~A.; and McIlraith, S.
\newblock 2017.
\newblock {Non-Deterministic Planning with Temporally Extended Goals: {LTL}
  over Finite and Infinite Traces}.
\newblock In {\em AAAI}.

\bibitem[\protect\citeauthoryear{Camacho \bgroup et al\mbox.\egroup
  }{2018}]{CamachoBMM18}
Camacho, A.; Baier, J.~A.; Muise, C.~J.; and McIlraith, S.~A.
\newblock 2018.
\newblock {Finite {LTL} Synthesis as Planning}.
\newblock In {\em {ICAPS}},  29--38.

\bibitem[\protect\citeauthoryear{Camacho, Bienvenu, and
  McIlraith}{2018}]{CamachoBM18}
Camacho, A.; Bienvenu, M.; and McIlraith, S.~A.
\newblock 2018.
\newblock {Finite {LTL} Synthesis with Environment Assumptions and Quality
  Measures}.
\newblock In {\em {KR}},  454--463.

\bibitem[\protect\citeauthoryear{Chatterjee and
  Henzinger}{2007}]{ChatterjeeH07}
Chatterjee, K., and Henzinger, T.~A.
\newblock 2007.
\newblock Assume-guarantee synthesis.
\newblock In {\em {TACAS}},  261--275.

\bibitem[\protect\citeauthoryear{Chatterjee, Henzinger, and
  Jobstmann}{2008}]{ChatterjeeHJ08}
Chatterjee, K.; Henzinger, T.~A.; and Jobstmann, B.
\newblock 2008.
\newblock {Environment Assumptions for Synthesis}.
\newblock In {\em {CONCUR}},  147--161.

\bibitem[\protect\citeauthoryear{{De Giacomo} and Rubin}{2018}]{DegRu18}
{De Giacomo}, G., and Rubin, S.
\newblock 2018.
\newblock {Automata-Theoretic Foundations of FOND Planning for
  {LTL}$_f$/{LDL}$_f$ Goals}.
\newblock In {\em {IJCAI}},  4729--4735.

\bibitem[\protect\citeauthoryear{{De Giacomo} and Vardi}{2013}]{DV13}
{De Giacomo}, G., and Vardi, M.
\newblock 2013.
\newblock {Linear Temporal Logic and Linear Dynamic Logic on Finite Traces}.
\newblock In {\em IJCAI},  854--860.

\bibitem[\protect\citeauthoryear{De~Giacomo and Vardi}{2015}]{DegVa15}
De~Giacomo, G., and Vardi, M.~Y.
\newblock 2015.
\newblock {Synthesis for {LTL} and {LDL} on Finite Traces}.
\newblock In {\em IJCAI},  1558--1564.

\bibitem[\protect\citeauthoryear{D'Ippolito \bgroup et al\mbox.\egroup
  }{2013}]{DIppolitoBPU13}
D'Ippolito, N.; Braberman, V.~A.; Piterman, N.; and Uchitel, S.
\newblock 2013.
\newblock {Synthesizing nonanomalous event-based controllers for liveness
  goals}.
\newblock {\em {ACM} Trans. Softw. Eng. Methodol.} 22(1):9:1--9:36.

\bibitem[\protect\citeauthoryear{Duret-Lutz \bgroup et al\mbox.\egroup
  }{2016}]{spot}
Duret-Lutz, A.; Lewkowicz, A.; Fauchille, A.; Michaud, T.; Renault, E.; and Xu,
  L.
\newblock 2016.
\newblock Spot 2.0 --- a framework for {LTL} and $\omega$-automata
  manipulation.
\newblock In {\em {ATVA}},  122--129.

\bibitem[\protect\citeauthoryear{Finkbeiner and Schewe}{2013}]{FinkbeinerS13}
Finkbeiner, B., and Schewe, S.
\newblock 2013.
\newblock {Bounded Synthesis}.
\newblock {\em {STTT}} 15(5-6):519--539.

\bibitem[\protect\citeauthoryear{Finkbeiner}{2016}]{finkbeiner2016synthesis}
Finkbeiner, B.
\newblock 2016.
\newblock {Synthesis of Reactive Systems.}
\newblock {\em Dependable Software Systems Eng.} 45:72--98.

\bibitem[\protect\citeauthoryear{Fogarty \bgroup et al\mbox.\egroup
  }{2013}]{DFogartyKVW13}
Fogarty, S.; Kupferman, O.; Vardi, M.~Y.; and Wilke, T.
\newblock 2013.
\newblock {Profile Trees for {B{\"u}chi} Word Automata, with Application to
  Determinization}.
\newblock In {\em {GandALF}}.

\bibitem[\protect\citeauthoryear{Fried, Tabajara, and Vardi}{2016}]{DrLu.cav16}
Fried, D.; Tabajara, L.~M.; and Vardi, M.~Y.
\newblock 2016.
\newblock {BDD-Based Boolean Functional Synthesis}.
\newblock In {\em {CAV}}.

\bibitem[\protect\citeauthoryear{Gerstacker, Klein, and
  Finkbeiner}{2018}]{GerstackerKF18}
Gerstacker, C.; Klein, F.; and Finkbeiner, B.
\newblock 2018.
\newblock {Bounded Synthesis of Reactive Programs}.
\newblock In {\em {ATVA}},  441--457.

\bibitem[\protect\citeauthoryear{Gr{\"{a}}del, Thomas, and
  Wilke}{2002}]{ALG02bis}
Gr{\"{a}}del, E.; Thomas, W.; and Wilke, T., eds.
\newblock 2002.
\newblock {\em {Automata, Logics, and Infinite Games: {A} Guide to Current
  Research [outcome of a Dagstuhl seminar, February 2001]}}, volume 2500 of
  {\em Lecture Notes in Computer Science}. Springer.

\bibitem[\protect\citeauthoryear{Green}{1969}]{Gree69}
Green, C.
\newblock 1969.
\newblock {Theorem Proving by Resolution as Basis for Question-Answering
  Systems}.
\newblock In {\em Machine Intelligence}, volume~4. American Elsevier.
\newblock  183--205.

\bibitem[\protect\citeauthoryear{Kupferman and Vardi}{2005}]{KupfermanV05}
Kupferman, O., and Vardi, M.~Y.
\newblock 2005.
\newblock {Safraless Decision Procedures}.
\newblock In {\em {FOCS}},  531--542.
\newblock {IEEE} Computer Society.

\bibitem[\protect\citeauthoryear{Li \bgroup et al\mbox.\egroup
  }{2019}]{LiRPZV19}
Li, J.; Rozier, K.~Y.; Pu, G.; Zhang, Y.; and Vardi, M.~Y.
\newblock 2019.
\newblock Sat-based explicit ltlf satisfiability checking.
\newblock In {\em {AAAI}},  2946--2953.

\bibitem[\protect\citeauthoryear{Martin}{1975}]{Mar75}
Martin, D.
\newblock 1975.
\newblock {Borel Determinacy}.
\newblock {\em Annals of Mathematics} 65:363--371.

\bibitem[\protect\citeauthoryear{Meyer, Sickert, and
  Luttenberger}{2018}]{MeyerSL18}
Meyer, P.~J.; Sickert, S.; and Luttenberger, M.
\newblock 2018.
\newblock {Strix: Explicit Reactive Synthesis Strikes Back!}
\newblock In {\em {CAV}},  578--586.

\bibitem[\protect\citeauthoryear{Pnueli and Rosner}{1989}]{PnueliR89}
Pnueli, A., and Rosner, R.
\newblock 1989.
\newblock {On the Synthesis of a Reactive Module}.
\newblock In {\em {POPL}}.

\bibitem[\protect\citeauthoryear{Pnueli}{1977}]{Pnu77}
Pnueli, A.
\newblock 1977.
\newblock {The Temporal Logic of Programs}.
\newblock In {\em FOCS},  46--57.

\bibitem[\protect\citeauthoryear{Rabin and Scott}{1959}]{RaSc59}
Rabin, M.~O., and Scott, D.
\newblock 1959.
\newblock {Finite Automata and Their Decision Problems}.
\newblock {\em IBM J. Res. Dev.} 3:114--125.

\bibitem[\protect\citeauthoryear{Somenzi}{2016}]{cudd}
Somenzi, F.
\newblock 2016.
\newblock {CUDD: CU Decision Diagram Package 3.0.0. Universiy of Colorado at
  Boulder}.

\bibitem[\protect\citeauthoryear{Zhu \bgroup et al\mbox.\egroup
  }{2017a}]{ZTLPV17}
Zhu, S.; Tabajara, L.~M.; Li, J.; Pu, G.; and Vardi, M.~Y.
\newblock 2017a.
\newblock {A Symbolic Approach to Safety LTL Synthesis}.
\newblock In {\em HVC},  147--162.

\bibitem[\protect\citeauthoryear{Zhu \bgroup et al\mbox.\egroup
  }{2017b}]{ZhuTLPV17}
Zhu, S.; Tabajara, L.~M.; Li, J.; Pu, G.; and Vardi, M.~Y.
\newblock 2017b.
\newblock Symbolic {LTL$_f$} {Synthesis}.
\newblock In {\em {IJCAI}},  1362--1369.

\end{thebibliography}

\appendix
\section{Appendix}
Due to the lack of space, we move some proofs and the details of the reduction from fair \LTLf synthesis and stable \LTLf synthesis to standard \LTL synthesis in this appendix.

For better readability, we redefine two-players \DFA games here. Two-player games on \DFA are games consisting of two players, the \emph{environment} and the \emph{agent}. $\X$ and $\Y$ are disjoint sets of environment Boolean variables and agent Boolean variables, respectively.
The \emph{specification} of the game arena is given by a \DFA $\G$ = $(2^{\X\cup \Y}, S, s_0, \delta, Acc)$, where

\begin{itemize}\itemsep=0pt
	\item $2^{\X\cup \Y}$ is the alphabet;
	\item $S$ is a set of states; 
	\item $s_0\in S$ is an initial state;
	\item $\delta: S\times 2^{\X\cup \Y}\rightarrow S$ is a transition function;
	\item $Acc\subseteq S$ is a set of accepting states.
\end{itemize}

Here, we consider two specific two-player games, fair \DFA game and stable \DFA game, both of which are described as $\langle \G, \alpha \rangle$, where $\G$ is the game arena and $\alpha$ is the environment constraint, which is a Boolean formula over $\X$.
\subsection{Fair \LTLf Synthesis}
Due to the determinacy of fair \DFA game, the set of agent winning states $Sys_{f}$ can be computed by negating $Env_f$:

\begin{centering}
	$Sys_{f} = \mu Z.\nu \hat{Z}.(\forall X. \exists Y. ((X \models \neg \alpha \vee
	\delta(s, X \cup Y) \in Z \cup Acc) \wedge \delta(s, X \cup Y) \in \hat{Z} \cup Acc))$.
\end{centering}

The following theorem guarantees the correctness of the set of agent winning states computation $Sys_{f}$.
\addtocounter{theorem}{+2}
\begin{theorem}\label{thm:winning_states}
	A fair \DFA game $\langle \G, \alpha \rangle$ has an agent winning strategy if and only if $s_0 \in Sys_{f}$.
\end{theorem}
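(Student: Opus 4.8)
The plan is to avoid reasoning about plays and strategies from scratch and instead reuse the two facts already established: the earlier theorem stating that $Env_{f}$ is exactly the set of environment winning states, and the determinacy of fair \DFA games~\cite{Mar75}, by which a state is agent winning if and only if it is not environment winning. Granting these, the whole statement reduces to a purely set-theoretic identity, namely that the fixpoint expression $Sys_{f}$ computes precisely the complement $S \setminus Env_{f}$. Once that is shown, the chain of equivalences ``$s_0 \in Sys_{f}$ iff $s_0 \notin Env_{f}$ iff $s_0$ is not environment winning iff $s_0$ is agent winning iff the game admits an agent winning strategy from $s_0$'' closes the argument, since by Definition~\ref{def:win_state} having an agent winning strategy means exactly that $s_0$ is an agent winning state.

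So first I would record the two imported facts, and then turn to establishing $Sys_{f} = S \setminus Env_{f}$ by dualizing the nested fixpoint for $Env_{f}$. The tools are the standard $\mu$-calculus dualities $\neg\,\nu Z.\,F(Z) = \mu Z.\,\neg F(\neg Z)$ and $\neg\,\mu Z.\,F(Z) = \nu Z.\,\neg F(\neg Z)$, together with De Morgan over the quantifiers and connectives, namely $\neg\exists X = \forall X\,\neg$, $\neg\forall Y = \exists Y\,\neg$, and $\neg(A \vee B) = \neg A \wedge \neg B$. Applying the $\nu/\mu$ swap to the outer fixpoint and the $\mu/\nu$ swap to the inner one, I would push the negation inward onto the matrix $(X \models \alpha \wedge \delta(s,X\cup Y)\in Z\backslash Acc)\vee \delta(s,X\cup Y)\in \hat{Z}\backslash Acc$ while simultaneously complementing both bound set variables. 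The quantifier prefix $\exists X.\forall Y$ flips to $\forall X.\exists Y$ and the disjunction becomes a conjunction, which is exactly the outer shape of $Sys_{f}$.

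The step I expect to demand the most care is the bookkeeping of the accepting set under complementation, where the ``$\backslash Acc$'' appearing in $Env_{f}$ must turn into ``$\cup\,Acc$'' in $Sys_{f}$. Concretely, after substituting the complemented set variable one simplifies $\delta(s,X\cup Y)\notin (\neg Z)\backslash Acc$; using $(\neg Z)\backslash Acc = \neg Z \cap \neg Acc$ and De Morgan on sets, this is equivalent to $\delta(s,X\cup Y)\in Z \cup Acc$, and likewise $\delta(s,X\cup Y)\notin(\neg\hat{Z})\backslash Acc$ becomes $\delta(s,X\cup Y)\in\hat{Z}\cup Acc$, while $\neg(X\models\alpha)$ is just $X\models\neg\alpha$. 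Collecting these, the negated matrix reads $((X\models\neg\alpha \vee \delta(s,X\cup Y)\in Z\cup Acc)\wedge \delta(s,X\cup Y)\in\hat{Z}\cup Acc)$, which matches the matrix of $Sys_{f}$ exactly. This confirms $S\setminus Env_{f} = Sys_{f}$ and, combined with determinacy and the characterization of $Env_{f}$, completes the proof.
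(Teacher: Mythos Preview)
Your proposal is correct and follows essentially the same route as the paper: the paper's proof also argues that $Sys_{f}$ is the dual of $Env_{f}$, then invokes the characterization of $Env_{f}$ as the environment winning states together with determinacy to conclude. The only difference is that you spell out the $\mu$/$\nu$ and quantifier dualization explicitly (including the $\backslash Acc \leftrightarrow \cup\,Acc$ bookkeeping), whereas the paper simply asserts that $Sys_{f}$ is the dual formula of $Env_{f}$ and moves on.
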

\begin{proof}
	Since $Sys_{f}$ is the dual formula of $Env_{f}$, for a state $s \in S$, we have $s \in Sys_{f}$ if and only if $s \notin Env_{f}$ such that $s$ is not a winning state for the environment, in which case $s$ is an agent winning state with winning strategy $g : (2^\X)^+ \rightarrow 2^\Y$. Therefore, for a state $s \in S$, we have $s \in Sys_{f}$ if and only if $s$ is an agent winning state. Moreover, fair \DFA game is realizable if and only if the initial state $s_0$ is an agent winning state. Consequently, we conclude that fair \DFA game $\langle \G, \alpha \rangle$ is realizable with agent winning strategy $g$ if and only if $s_0 \in Sys_{f}$. 
\end{proof}

Define an output function $\omega_f: Sys_{f} \times 2^\X \rightarrow 2^\Y$ as follows: for $s \in Z_{i+1} \backslash Z_i$, for all possible values $X \in 2^\X$, set $Y$ to be such that $(X \models \neg \alpha \vee  \delta(s, X \cup Y ) \in Z_i \cup Acc) \wedge \delta(s, X \cup Y ) \in Sys_{f} \cup Acc$ holds for $s \notin Acc$. Consider a deterministic finite transducer $\T$ defined in the sense that constructing $\omega_f$ so as described above, the following theorem guarantees that $\T$ generates an agent winning strategy $g$. The following theorem guarantees that deterministic finite transducer $\T$ is able to generate a winning strategy $g$ for the agent.
\begin{theorem}
	Strategy $g$ with $g(\lambda) = \omega_f(\varrho(\lambda))$ is a winning strategy for the agent.
\end{theorem}
\begin{proof}
	Consider an arbitrary environment trace $\lambda = X_0,X_1,\ldots\in (2^{\X})^{\omega}$, the corresponding play over $\G$ that follows $g$ is $\rho = (s_0, X_0\cup g(X_0)), (s_1, X_1\cup g(X_0,X_1)), \ldots$. We now prove that $\rho$ is a winning play for the agent. For every state $s$ along the play $\rho$, the construction of $\omega_f$ ensures that, no matter how the environment sets $X$, $\omega_f$ returns $Y$ such that $(X \models \neg \alpha \vee  \delta(s, X \cup Y ) \in Z_i \cup Acc ) \wedge \delta(s, X \cup Y ) \in Sys_{f} \cup Acc$ holds. Thus we either have $\neg \alpha$ holds, or $\rho$ visits $Z_i \cup Acc$. At the same time, $\rho$ keeps in $Sys_{f} \cup Acc$. The first possibility keeps the stability condition and the latter one retains the reachability condition, by inductive hypothesis, both of them give $\rho$ a winning play. Therefore, $g$ is a winning strategy for the agent.
\end{proof}
\subsection{Stable \LTLf Synthesis}
In stable \DFA game $\langle \G, \alpha \rangle$, we compute the set of environment winning states as follows:

\begin{centering}
	$Env_{st}= \mu Z.\nu \hat{Z}.(\exists X. \forall Y. ( (X \models \alpha  \wedge 
	\delta(s, X  \cup Y) \in \hat{Z} \backslash Acc) \vee \delta(s, X \cup Y) \in Z\backslash Acc))$,
\end{centering}

where $X$ ranges over  $2^\X$ and $Y$ over $2^\Y$. 

The fixpoint stages for $Z$~(note $Z_{i} \subseteq Z_{i+1}$, for $i \geq 0$, by monotonicity) are:
\begin{itemize}\itemsep=0pt
	\item 
	$Z_0 = \emptyset$,
	\item 
	$Z_{i+1} = \nu \hat{Z}.(\exists X. \forall Y. ((X \models \alpha \wedge 
	\delta(s, X \cup Y) \in \hat{Z} \backslash Acc) \vee \delta(s, X \cup Y) \in Z_i \backslash Acc))$.
\end{itemize}
Eventually, $Env_{st} = Z_k$ for some $k$ such that $Z_{k+1} = Z_{k}$.

The fixpoint stages for $\hat{Z}$ with respect to $Z_i$~(note $\hat{Z}_{j+1} \subseteq \hat{Z}_{j}$, for $j \geq 0$, by monotonicity) are:

\begin{itemize}\itemsep=0pt
	\item 
	$\hat{Z}_{i,0} = S$,
	\item
	$\hat{Z}_{i,j+1} = \exists X. \forall Y. ((X \models \alpha \wedge 
	\delta(s, X \cup Y) \in \hat{Z}_{i,j} \backslash Acc) \vee \delta(s, X \cup Y) \in Z_i \backslash Acc)$.
\end{itemize}

Finally, $\hat{Z}_i = \hat{Z}_{i,k}$ for some $k$  such that $\hat{Z}_{i,k+1} = \hat{Z}_{i,k}$.
The following theorem assures that the nested fixpoint computation of $Env_{st}$ collects exactly all environment winning states in stable \DFA game.
\addtocounter{theorem}{+1}
\begin{theorem}
	For a stable \DFA game $\langle \G, \alpha \rangle$ and a state $s \in S$, we have $s \in Env_{st}$ iff $s$ is an environment winning state.
\end{theorem}

\begin{proof}
	We prove the theorem in both directions.
	
	$\leftarrow:$ We proceed the proof by showing the contropositive. A state $s \notin Env_{st}$ indicates that $s$ cannot be added to $Z_{i+1}$ at stage $i+1$ for all $i \geq 0$. Then $s \notin \nu \hat{Z}.(\exists X. \forall Y. ((X \models \alpha \wedge 
	\delta(s, X \cup Y) \in \hat{Z} \backslash Acc) \vee \delta(s, X \cup Y) \in Z \backslash Acc))$. That is, no matter what the~(environment)~strategy $h$ is, traces from $s$ satisfy \textbf{neither} of the following conditions: 
	\begin{itemize}
		\item $\alpha$ holds and the trace gets trapped in $\hat{Z}$ without visiting any accepting states such that $X \models \alpha \wedge 
		\delta(s, X \cup Y) \in \hat{Z} \backslash Acc$ holds, in which case, $s$ is a new environment winning state;
		\item one already defined environment winning state gets visited such that $\delta(s, X \cup Y) \in Z \backslash Acc$ holds, in which case, $s$ is a new environment winning state.
	\end{itemize}
	Therefore, $s$ is not an environment winning state. So if $s$ is an environment winning state then $s \in Env_{st}$ holds. 
	
	$\rightarrow:$ If a state $s \in Env_{st}$, then $s \in \nu \hat{Z}.(\exists X. \forall Y. ((X \models \alpha \wedge 
	\delta(s, X \cup Y) \in \hat{Z} \backslash Acc) \vee \delta(s, X \cup Y) \in Z \backslash Acc))$. That is, no matter what the~(system)~strategy $g$ is, traces from $s$ satisfy either of the following conditions:
	\begin{itemize}
		\item $\alpha$ holds and the trace gets trapped in $\hat{Z}$ without visiting any accepting states such that $X \models \alpha \wedge 
		\delta(s, X \cup Y) \in \hat{Z} \backslash Acc$ holds, in which case, $s$ is a new environment winning state;
		\item one already defined environment winning state gets visited such that $\delta(s, X \cup Y) \in Z \backslash Acc$ holds, in which case, $s$ is a new environment winning state.
	\end{itemize}
	Thus $s$ is a winning state for the environment. 
\end{proof}

In stable \DFA game $\langle \G, \alpha \rangle$, the set of agent winning states can be computed as follows:

\begin{centering}
	$Sys_{st} = \nu Z. \mu \hat{Z}.(\forall X. \exists Y. ((X \models \neg \alpha \vee 
	\delta(s, X \cup Y) \in \hat{Z} \cup Acc) \wedge \delta(s, X \cup Y) \in Z \cup Acc))$
\end{centering}
\begin{theorem}\label{thm:winning_states}
	A stable \DFA game $\langle \G, \alpha \rangle$ has an agent winning strategy if and only if $s_0 \in Sys_{st}$.
\end{theorem}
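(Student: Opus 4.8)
The plan is to follow exactly the route used for the agent-winning-states characterization in the fair case: reduce the statement to the already-proved characterization of $Env_{st}$ together with determinacy, rather than reasoning about strategies from scratch. The first and only computational step would be to verify that $Sys_{st}$ is the set-theoretic complement of $Env_{st}$, i.e.\ that $s \in Sys_{st}$ iff $s \notin Env_{st}$ for every $s \in S$. This is a purely syntactic duality check. Writing $Env_{st} = \mu Z.\nu \hat{Z}.B(Z,\hat{Z})$ and using $\overline{\mu Z.G(Z)} = \nu Z.\overline{G(\overline{Z})}$ (and the analogous rule for $\nu$), the complement pushes the negation inward: the fixpoint prefix $\mu Z.\nu \hat{Z}$ of $Env_{st}$ becomes $\nu Z.\mu \hat{Z}$, the quantifier prefix $\exists X.\forall Y$ becomes $\forall X.\exists Y$, the disjunction becomes a conjunction, the condition $X \models \alpha$ becomes $X \models \neg\alpha$, and each membership test $\delta(s,X\cup Y)\in (\cdot)\backslash Acc$ becomes $\delta(s,X\cup Y)\in (\cdot)\cup Acc$, since $\overline{\,\overline{T}\backslash Acc\,} = T\cup Acc$. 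Carrying out this computation reproduces verbatim the fixpoint expression defining $Sys_{st}$, so $Sys_{st}=\overline{Env_{st}}$.

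With the duality in hand, I would chain three equivalences. By the preceding theorem of this section, $s \in Env_{st}$ iff $s$ is an environment winning state; hence $s \in Sys_{st}$ iff $s$ is \emph{not} an environment winning state. By the determinacy of the stable \DFA game (Martin's theorem \cite{Mar75}, as noted after Definition~\ref{def:win_state}), a state fails to be environment-winning exactly when it is agent-winning, so $s \in Sys_{st}$ iff $s$ is an agent winning state. Finally, by the definition of realizability the game admits an agent winning strategy iff the initial state $s_0$ is an agent winning state; combining this with the previous equivalence yields that the stable \DFA game is realizable iff $s_0 \in Sys_{st}$, which is the claim.

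I expect the only delicate step to be the duality computation in the first paragraph, and specifically keeping the two set operations straight: one must track that $\backslash Acc$ dualizes to $\cup Acc$ while simultaneously swapping $\mu \leftrightarrow \nu$, $\exists \leftrightarrow \forall$, $\wedge \leftrightarrow \vee$, and $\alpha \leftrightarrow \neg\alpha$, and checking that the inner/outer fixpoint exchange stays consistent with the quantifier and connective swaps. Once $Sys_{st}=\overline{Env_{st}}$ is established, the remaining steps are immediate appeals to the already-proved characterization of $Env_{st}$ and to determinacy, and require no further calculation; this mirrors the short argument given for the fair case.
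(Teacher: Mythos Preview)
Your proposal is correct and follows essentially the same route as the paper: the paper's proof also asserts that $Sys_{st}$ is the dual of $Env_{st}$, invokes the preceding theorem characterizing $Env_{st}$ as the environment winning states, appeals to determinacy to conclude that $Sys_{st}$ is exactly the set of agent winning states, and then uses the definition of realizability at $s_0$. The only difference is that you spell out the duality computation explicitly, whereas the paper simply states that $Sys_{st}$ is the dual formula of $Env_{st}$ without unpacking the syntactic details.
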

\begin{proof}
	Since $Sys_{st}$ is the dual formula of $Env_{st}$, for a state $s \in S$, we have $s \in Sys_{st}$ if and only if $s \notin Env_{st}$ such that $s$ is not a winning state for the environment, in which case $s$ is an agent winning state with winning strategy $g : (2^\X)^+ \rightarrow 2^\Y$. Therefore, for a state $s \in S$, we have $s \in Sys_{st}$ if and only if $s$ is an agent winning state. Moreover, stable \DFA game is realizable if and only if the initial state $s_0$ is an agent winning state. Consequently, we conclude that stable \DFA game $\langle \G, \alpha \rangle$ is realizable with agent winning strategy $g$ if and only if $s_0 \in Sys_{st}$. 
\end{proof}

We extract the output function $\omega_{st}: Q \times 2^\X \rightarrow 2^\Y$ for the game from the approximates for $Z$ assuming $\hat{Z}$ to be $Sys_{st}$, from where no matter what the environment strategy is, traces cannot always get $\alpha$ hold. Thus, we consider the fixpoint computation as follows:

\begin{centering}
	$\nu Z.(\forall X. \exists Y. ((X \models \neg \alpha \vee \delta(s, X \cup Y) \in Sys_{st} \cup Acc) \wedge \delta(s, X \cup Y) \in Z \cup Acc))$
\end{centering}

with approximates defined as:
\begin{itemize}\itemsep=0pt
	\item 
	$Z_{0} = S$,
	\item
	$Z_{i+1} = \forall X. \exists Y. ((X \models \neg \alpha \vee \delta(s, X \cup Y) \in Sys_{st} \cup Acc) \wedge \delta(s, X \cup Y) \in Z_i \cup Acc)$.
\end{itemize}

Define an output function $\omega_{st}: Sys_{st} \times 2^\X \rightarrow 2^\Y$ as follows: for $s \in Z_{i+1} \cap Z_i$, for all possible values $X \in 2^\X$, set $Y$ to be such that $(X \models \neg \alpha \vee \delta(s, X \cup Y) \in Sys_{st} \cup Acc) \wedge \delta(s, X \cup Y) \in Z_i \cup Acc$ holds for $s \notin Acc$. Consider a deterministic finite transducer $\T$ defined in the sense that constructing $\omega_{st}$ so as described above, the following theorem guarantees that $\T$ generates an agent winning strategy $g$.

\begin{theorem}
	Strategy $g$ with $g(\lambda) = \omega_{st}(\varrho(\lambda))$ is a winning strategy for the agent.
\end{theorem}
\begin{proof}
	Consider an arbitrary environment trace $\lambda = X_0,X_1,\ldots\in (2^{\X})^{\omega}$, the corresponding play over $\G$ that follows $g$ is $\rho = (s_0, X_0\cup g(X_0)), (s_1, X_1\cup g(X_0,X_1)), \ldots$. We now prove that $\rho$ is a winning play for the agent. For every state $s$ along the play $\rho$, the construction of $\omega_{st}$ ensures that, no matter how the environment sets $X$, $\omega_{st}$ returns $Y$ such that $s \in (X \models \neg \alpha \vee \delta(s, X \cup Y) \in Sys_{st} \cup Acc) \wedge \delta(s, X \cup Y) \in Z_i \cup Acc$ holds. Thus we either have $\neg \alpha$ holds, or $\rho$ stays in $Sys_{st} \cup Acc$. At the same time, $\rho$ visits $Z_i \cup Acc$. The first possibility keeps the recurrence condition and the latter one remains the reachability condition, by inductive hypothesis, both of them give $\rho$ a winning play. Therefore, $g$ is a winning strategy for the agent.
\end{proof}
\subsection{Reduction to \LTL Synthesis}\label{sec:LTL}
In addition to the fixpoint-based automata theoretical solution, an alternative approach to fair \LTLf synthesis and stable \LTLf synthesis can be obtained by a reduction to standard \LTL synthesis. 

\addtocounter{definition}{+6}
\begin{definition}[\LTL Synthesis]
	Let $\psi$ be an \LTL formula over an alphabet $\mathcal{P}$ and $\X, \Y$ be two disjoint atom sets such that $\X \cup \Y = \mathcal{P}$. $\psi$ is \emph{realizable} with respect to $\langle\X, \Y\rangle$ if there exists a strategy $f: (2^{\X})^+ \rightarrow 2^{\Y}$, such that for an arbitrary infinite sequence $\lambda = X_0,X_1,\ldots\in (2^{\X})^{\omega}$, $\psi$ is true in the infinite trace $\rho= (X_0\cup f(X_0)), (X_1\cup f(X_0,X_1)), (X_2\cup f(X_0,X_1,X_2))\ldots$. The \emph{synthesis} procedure is to compute such a strategy if $\psi$ is \emph{realizable}.
\end{definition}

Reducing fair or stable \LTLf synthesis to \LTL synthesis allows tools for general \LTL synthesis to be used in solving fair \LTLf synthesis and stable \LTLf synthesis. The reduction adopts the translation rules in~\cite{DV13} to polynomially transform an \LTLf formula $\phi$ over propositions $\X \cup \Y$ to \LTL formula $\psi$ over $\X \cup \Y \cup \{alive\}$ by introducing a new variable $alive$. As such, we have $\phi$ is satisfiable if and only if $\psi$ is satisfiable. The translation requires a function $t$ that reads an \LTLf formula and returns an \LTL formula, which is defined as follows:
\begin{itemize}
	\item 
	$t(a) = a$
	\item 
	$t(\neg \phi_1) = \neg t(\phi_1)$
	\item 
	$t(\phi_1 \wedge \phi_2) = t(\phi_1) \wedge t(\phi_2)$
	\item 
	$t(X \phi) = X(alive \wedge t(\phi))$
	\item 
	$t(\phi_1 U \phi_2) = t(\phi_1) U (alive \wedge t(\phi_2))$ 
\end{itemize}
Finally, $\psi = t(\phi) \wedge alive \wedge (alive~U~(G\neg alive))$. Since the proof of the satifiability equivalence between $\phi$ and $\psi$ is implicit in~\cite{DV13}, we show here the following lemma to ensure such relation. 
To relate a finite trace $\tau$ satisfying $\phi$ to an infinite trace $\tau'$ satisfying $\psi$, we introduce here a so called \emph{extension equivalence} as follows. $\tau$ is extension equivalent to $\tau'$, denoted as $\tau \approx \tau'$ if:
\begin{itemize}
	\item 
	$\tau'[i] = \tau[i] \wedge alive$, for $0 \leq i \leq e$, where $e$ indicates the last point of finite trace $\tau$ such that $e = |\tau| - 1$;
	\item 
	$\tau'[i] = \neg alive$, for $i > e$.
\end{itemize}

\begin{lemma}~\label{lem:sat_equ}
	Let $\phi$ be an \LTLf formula, $\psi$ be the corresponding translated \LTL formula, and $\tau$ be a finite trace, $\tau'$ be an infinite trace with $\tau \approx \tau'$. Then $\tau \models \phi$ iff $\tau' \models \psi$ is true.
\end{lemma}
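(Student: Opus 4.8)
The plan is to reduce the biconditional to a single structural induction over $\phi$, carried out pointwise along the trace. Since $\psi = t(\phi) \wedge alive \wedge (alive\, U\, (G\neg alive))$ conjoins $t(\phi)$ with two structural constraints, I would first observe that, whenever $\tau \approx \tau'$, these two extra conjuncts hold automatically at position $0$. As $\tau$ is nonempty we have $e \geq 0$, so $alive \in \tau'[0]$, giving the conjunct $alive$; and since $\tau'$ carries $alive$ at exactly the positions $0,\dots,e$ and $\neg alive$ at every position $>e$, the formula $G\neg alive$ holds at position $e+1$ while $alive$ holds at all earlier positions, so $alive\, U\,(G\neg alive)$ holds at $0$. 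Hence $\tau' \models \psi$ iff $\tau',0 \models t(\phi)$, and the lemma collapses to showing $\tau,0\models\phi$ iff $\tau',0\models t(\phi)$.

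For this I would prove the following stronger claim by structural induction on the \LTLf formula: for every subformula $\chi$ of $\phi$ and every index $i$ with $0 \leq i \leq e$, we have $\tau,i \models \chi$ iff $\tau',i \models t(\chi)$. The essential point is that the claim is asserted only for \emph{live} positions $i \le e$; this restriction is what lets the induction go through. The atomic case is immediate, since for $a \in \mathcal{P}$ and $i \le e$ we have $a \in \tau'[i]$ iff $a \in \tau[i]$ (as $\tau'[i] = \tau[i]\wedge alive$ agrees with $\tau[i]$ off $alive$). Negation and conjunction are routine: both the \LTLf and \LTL clauses are evaluated at the \emph{same} position $i \le e$, so the inductive hypothesis transfers directly; in particular $t(\neg\chi)=\neg t(\chi)$ is sound precisely because we never leave the live region.

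The two cases carrying the real content are Next and Until, and these are where the $alive$ marker encodes the boundary of the finite trace. For $\chi = X\chi_1$ at a live position $i$, the \LTLf semantics require $i+1 < |\tau|$, i.e. $i+1 \le e$; on the \LTL side $\tau',i \models X(alive \wedge t(\chi_1))$ forces $alive \in \tau'[i+1]$, which holds exactly when $i+1 \le e$, and then the inductive hypothesis (applicable since $i+1 \le e$) equates $\tau',i+1\models t(\chi_1)$ with $\tau,i+1\models\chi_1$. For $\chi = \chi_1 U \chi_2$, the translation $t(\chi_1)\, U\,(alive \wedge t(\chi_2))$ forces any witness $j$ for the right disjunct to satisfy $alive$, hence $j \le e$; all intermediate positions $k$ with $i \le k < j$ then satisfy $k < e$, so they are live and the inductive hypothesis applies both to the $\chi_2$ witness and to the $\chi_1$ obligations, matching exactly the \LTLf Until semantics restricted to indices $<|\tau|$.

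The main obstacle, and the only delicate point, is keeping the induction confined to live positions: proving the biconditional at \emph{arbitrary} positions of $\tau'$ would fail beyond $e$, where $t(\chi)$ need not track $\chi$ at all. The Next and Until cases are exactly where a naive argument could slip, since they look one or several steps forward; the discipline of only ever invoking the inductive hypothesis at positions $j \le e$—guaranteed by the $alive$ conjunct forcing $j \le e$—is what closes them. With the stronger claim in hand, instantiating it at $i=0$ and combining with the two automatically-satisfied top-level conjuncts yields $\tau \models \phi$ iff $\tau' \models \psi$ in both directions.
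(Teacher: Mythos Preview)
Your proof is correct and follows essentially the same structural-induction approach as the paper: first dispatch the two top-level conjuncts $alive$ and $alive\,U\,(G\neg alive)$ as automatic consequences of $\tau \approx \tau'$, then induct on the formula. Your execution is in fact cleaner than the paper's---you make explicit the strengthened hypothesis at all live positions $i \le e$ and argue both directions, while the paper only writes out the forward direction and leaves the position-wise strengthening implicit via suffix notation $\tau_j$, $\tau_j'$.
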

\begin{proof}
	Since $|\tau| > 0$, it is straightforward to show that $\tau' \models alive \wedge (alive~U~(G\neg alive))$ since $\tau \approx \tau'$ . Now we prove the lemma by a constructive induction of $\phi$.
	\begin{itemize}
		\item 
		If $\phi = a$, then $\psi = a \wedge alive \wedge (alive~U~(G\neg alive))$, $\tau \models a$ such that $\tau[0] \models a$, in which case $\tau'[0] \models a$ such that $\tau' \models a$. Therefore, $\tau' \models \psi$.
		\item 
		If $\phi = \neg \phi_1$, then $\psi = \neg t(\phi_1) \wedge alive \wedge (alive~U~(G\neg alive))$. $\tau \models \phi$ such that $\tau \nvDash \phi_1$ holds. By induction hypothesis, $\tau' \nvDash t(\phi_1)$ such that $\tau' \models \neg t(\phi_1)$. Therefore, $\tau' \models \psi$.
		\item 
		If $\phi = \phi_1 \wedge \phi_2$, then $\psi = t(\phi_1) \wedge t(\phi_2) \wedge alive \wedge (alive~U~(G\neg alive))$. $\tau \models \phi_1 \wedge \phi_2$ such that $\tau \models \phi_1$ and $\tau \models \phi_2$ hold. By induction hypothesis, $\tau' \models t(\phi_1)$ and $\tau' \models t(\phi_2)$ hold such that $\tau' \models t(\phi_1) \wedge t(\phi_2)$. Therefore, $\tau' \models \psi$.
		\item 
		If $\phi = X \phi_1$, then $\psi = X(alive \wedge t(\phi_1)) \wedge alive \wedge (alive~U~(G\neg alive))$. $\tau \models \phi$ such that $\tau_1 \models \phi_1$ holds. By induction hypothesis, $\tau_1' \models alive \wedge t(\phi_1)$ such that $\tau' \models X(alive \wedge t(\phi))$. Therefore, $\tau' \models \psi$.
		\item 
		If $\phi = \phi_1 U \phi_2$, then $\psi = t(\phi_1) U (alive \wedge t(\phi_2)) \wedge alive \wedge (alive~U~(G\neg alive))$. $\tau \models \phi$ such that there exists $j \geq 0$ such that $\tau_j \models \phi_2$, and for $0 \leq i < j$, we have $\tau_i \models \phi_1$. By induction hypothesis, $\tau_j' \models alive \wedge t(\phi_2)$ and $\tau_i' \models t(\phi_1)$ hold for $0 \leq i < j$, thus $\tau' \models t(\phi_1) U (alive \wedge t(\phi_2))$. Therefore, $\tau' \models \psi$. 
	\end{itemize}
\end{proof}

To show the reduction from fair \LTLf synthesis and stable \LTLf synthesis to \LTL synthesis, we start by assigning the environment and agent variables. Intuitively, $alive$ is a signal whose failure indicates the end of the finite trace. Therefore, $alive$ is assigned as an agent variable such that the agent can keep setting $alive$ as $true$ until $alive$ is set to $false$ when $\phi$ is satisfied. The environment constraint $\alpha$ over environment variables in both of fair \LTLf synthesis and stable \LTLf synthesis is a condition for the satisfaction of the desired goal $\phi$. Since being realizable requires $\phi$ to be satisfied under the condition such that $\alpha$ holds infinitely often for fair \LTLf synthesis and eventually holds forever for stable \LTLf synthesis, we obtain the \LTL goal $GF\alpha \rightarrow \psi$ and $FG\alpha \rightarrow \psi$, respectively. In both cases, $\psi$ is the corresponding translated \LTL formula of $\phi$.
Thus solving the problem $\langle \X, \Y, \alpha, \phi \rangle $ is reduced to solving the \LTL synthesis problem $\langle \X, \Y \cup \{alive\}, GF \alpha \rightarrow \psi \rangle$ for fair \LTLf synthesis, and to $\langle \X, \Y \cup \{alive\}, FG \alpha \rightarrow \psi \rangle$ for stable \LTLf synthesis. The following theorems guarantee the correctness of this reduction respectively.

\begin{theorem}
Let $\phi$ be an \LTLf formula, $\psi$ be the corresponding translated \LTL formula, then fair \LTLf synthesis problem $\langle \X, \Y, \alpha, \phi \rangle$ is realizable if and only if \LTL formula $GF \alpha \rightarrow \psi$ is realizable with respect to $\langle \X, \Y \cup \{alive\} \rangle$.
\end{theorem}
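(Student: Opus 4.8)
The plan is to prove both directions of the biconditional by using Lemma~\ref{lem:sat_equ} as the bridge between finite and infinite traces, exploiting the fact that $alive$ is an agent‑controlled variable marking the end of the goal‑satisfying prefix. The organising observation is that the antecedent $GF\alpha$ matters only on $\alpha$-fair environment traces: on any $\lambda$ that is not $\alpha$-fair the implication $GF\alpha \limp \psi$ holds vacuously, so in both directions I only need to reason about $\alpha$-fair traces, which is exactly the class over which the fair \LTLf synthesis definition constrains the strategy.

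For the direction $\rightarrow$, I would assume a strategy $g:(2^\X)^+ \to 2^\Y$ realizing $\langle \X, \Y, \alpha, \phi\rangle$ and build an \LTL strategy $f:(2^\X)^+ \to 2^{\Y \cup \{alive\}}$ that outputs $g$'s choice on $\Y$ and manages $alive$ as follows: keep $alive$ true while tracking the run of $\G_\phi$ on the induced play, and at the \emph{first} step $k$ at which the reached state lies in $Acc$ (equivalently, the prefix $\rho^k$ satisfies $\phi$), switch $alive$ to false from step $k+1$ on, choosing the later $\Y$-values arbitrarily. For an $\alpha$-fair $\lambda$ the realizability of $g$ guarantees such a $k$ exists, so the induced infinite trace $\rho'$ is extension equivalent, $\rho^k \approx \rho'$, to the finite trace $\rho^k$; since $\rho^k \models \phi$, Lemma~\ref{lem:sat_equ} yields $\rho' \models \psi$, hence $\rho' \models GF\alpha \limp \psi$. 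For a non-$\alpha$-fair $\lambda$ the implication is vacuously true, so $f$ realizes $GF\alpha \limp \psi$.

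For the direction $\leftarrow$, I would assume $f:(2^\X)^+ \to 2^{\Y \cup \{alive\}}$ realizes $GF\alpha \limp \psi$ and define $g$ by projecting away $alive$, i.e.\ $g(X_0,\ldots,X_i) = f(X_0,\ldots,X_i) \cap \Y$. Given an $\alpha$-fair $\lambda$, the antecedent $GF\alpha$ holds, so the induced trace $\rho'$ satisfies $\psi$; in particular the conjunct $alive \wedge (alive~U~(G\neg alive))$ forces $alive$ to be true on a nonempty initial segment $[0,e]$ and false thereafter. Projecting $\rho'$ onto this segment yields a finite trace $\tau$ with $\tau \approx \rho'$, and by construction $\tau$ coincides with the trace $\rho^e$ induced by $g$. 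Applying Lemma~\ref{lem:sat_equ} to $\rho' \models \psi$ gives $\tau \models \phi$, i.e.\ $\rho^e \models \phi$ with $k=e$, which is precisely the condition fair \LTLf realizability of $g$ demands.

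The main obstacle is the $\rightarrow$ direction: I must argue that $f$ can decide \emph{causally} when to drop $alive$ and that a single such strategy works uniformly across all environment traces. The key is that ``the current prefix satisfies $\phi$'' is a property of the observed history, certified by the run of $\G_\phi$, so the stopping time is a function of the play so far; the fairness restriction together with the vacuous treatment of non-fair traces then lets the guarantee provided by $g$ carry over. A minor point worth checking is that, because every temporal operator in $t(\phi)$ is guarded by $alive$, the $\X,\Y$ values chosen after the stopping point cannot affect $\psi$, so leaving them arbitrary in the $\rightarrow$ construction (and ignoring them in the $\leftarrow$ projection) is harmless.
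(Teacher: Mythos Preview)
Your proposal is correct and follows essentially the same approach as the paper: both directions use Lemma~\ref{lem:sat_equ} as the bridge, with the $\rightarrow$ direction managing $alive$ so that it stays true until the first time the goal prefix is satisfied and the $\leftarrow$ direction projecting $alive$ away, together with a case split on whether the environment trace is $\alpha$-fair. Your write-up is in fact slightly more careful than the paper's in making the causality of the stopping time explicit via the run of $\G_\phi$, but the underlying argument is the same.
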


\begin{proof}
 We prove the two directions separately.
 \begin{itemize}
     \item 
     $\leftarrow: $ Since $GF \alpha \rightarrow \psi$ is realizable with respect to $\langle \X, \Y \cup \{alive\} \rangle$, there exists a winning strategy $g': (2^\X)^+ \rightarrow 2^{\Y \cup \{alive\}}$ such that every trace $\rho'$ that follows $g'$ gets $GF \alpha \rightarrow \psi$ hold, therefore, enabling either of the following situations:
     \begin{itemize}
         \item 
         $GF\alpha$ is true such that the environment behaves such as having $\alpha$ hold infinitely often, and $\psi$ holds. Therefore, $\rho' \models alive \wedge (alive~U~(G\neg alive)) $, and there exists a position $e$ such that $\rho'[i] \models alive$ for $0 \leq i \leq e$ and $\rho'[i] \models \neg alive$ for $i > e$. Thus we have a finite trace $\rho$ such that $\rho \approx \rho'$ with $e$. Therefore, $\rho \models \phi$ holds by Lemma~\ref{lem:sat_equ}.
         \item 
         $GF\alpha$ is falsified such that the environment behaves such as only having $\alpha$ hold for finite times, in which case the fairness assumption is violated, we conclude that $\rho \models \phi$ holds by default. 
     \end{itemize}
     Finally, in order to obtain the winning strategy $g$, we have $g(\lambda) = g'(\lambda)|_{\Y}$, where $\lambda \in (2^\X)^\omega$.
     
     \item
     $\rightarrow: $ Since $\langle \X, \Y, \alpha, \phi \rangle$ is realizable, there is a winning strategy $g: (2^\X)^+ \rightarrow 2^{\Y}$ such that for every trace $\rho$ that follows $g$, either of the following situations happens:
     \begin{itemize}
         \item 
         The environment behaves such as having $\alpha$ hold infinitely often such that $GF\alpha$ is true, then there is $k \geq 0$ such that $\rho^k \models \phi$. Since $alive$ is assigned as an agent variable, we can construct a play $\rho'$ such that $\rho'[i] = \rho[i] \wedge alive$ for $0 \leq i \leq k$ and $\rho'[i] = \rho[i] \wedge \neg alive$ for $i > k$. Thus we have $\rho' \approx \rho$ such that $\rho' \models \psi$ by Lemma~\ref{lem:sat_equ}. Therefore, we conclude that $\rho' \models GF\alpha \rightarrow \psi$ holds.
         \item 
         The environment behaves such as violating the fairness assumption such that only having $\alpha$ hold for finite times, in which case $GF\alpha$ doesn not hold. Therefore, $GF\alpha \rightarrow \psi$ is true.
     \end{itemize}
     Finally, in order to obtain the winning strategy $g'$, we have $g'(\lambda) = g(\lambda) \wedge alive$ if $\phi$ has not been satisfied and $g'(\lambda) = g(\lambda) \wedge \neg alive$ since $\phi$ has been satisfied, where $\lambda \in (2^\X)^\omega$.
 \end{itemize}
\end{proof}

\begin{theorem}
	Let $\phi$ be an \LTLf formula, $\psi$ be the corresponding translated \LTL formula, then stable \LTLf synthesis problem $\langle \X, \Y, \alpha, \phi \rangle$ is realizable if and only if \LTL formula $GF \alpha \rightarrow \psi$ is realizable with respect to $\langle \X, \Y \cup \{alive\} \rangle$.
\end{theorem}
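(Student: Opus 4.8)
The plan is to mirror the proof of the analogous theorem for fair \LTLf synthesis, replacing the fairness assumption $GF\alpha$ by the stability assumption $FG\alpha$ throughout. (Note that the statement as written should read $FG\alpha \rightarrow \psi$, consistent with the reduction target $\langle \X, \Y \cup \{alive\}, FG\alpha \rightarrow \psi\rangle$ fixed in the preceding discussion.) I would prove the two directions separately, in each case using Lemma~\ref{lem:sat_equ} to translate between a finite trace satisfying $\phi$ and its extension-equivalent infinite trace satisfying $\psi$, and exploiting the fact that $alive$ is under the agent's control.

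For the direction $\leftarrow$, I would start from a winning strategy $g': (2^\X)^+ \rightarrow 2^{\Y \cup \{alive\}}$ for $FG\alpha \rightarrow \psi$, so that every trace $\rho'$ following $g'$ satisfies $FG\alpha \rightarrow \psi$. I would then split on whether the environment trace is $\alpha$-stable. If $FG\alpha$ holds, then $\psi$ holds; since $\psi$ carries the conjunct $alive \wedge (alive\ U\ (G\neg alive))$, there is a unique cutoff $e$ with $\rho'[i] \models alive$ for $i \le e$ and $\rho'[i] \models \neg alive$ afterwards, so the finite prefix $\rho$ with $\rho \approx \rho'$ satisfies $\phi$ by Lemma~\ref{lem:sat_equ}. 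If $FG\alpha$ fails, the trace is not $\alpha$-stable and the stable realizability condition is met vacuously. The agent strategy is recovered as $g(\lambda) = g'(\lambda)|_{\Y}$.

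For the direction $\rightarrow$, I would start from a strategy $g$ realizing the stable \LTLf synthesis problem and build $g'$ by adjoining the $alive$ bit: $g'$ copies $g$ and keeps $alive$ true until the round $k$ at which $\phi$ first becomes satisfied on the current history, then sets $alive$ false from $k+1$ on. For an $\alpha$-stable environment trace, realizability gives such a $k$ with $\rho^k \models \phi$; the constructed $\rho'$ then satisfies $\rho' \approx \rho$, hence $\rho' \models \psi$ by Lemma~\ref{lem:sat_equ}, and so $FG\alpha \rightarrow \psi$ holds. For a non-$\alpha$-stable trace, $FG\alpha$ is false and the implication is vacuously true.

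The main obstacle is the bookkeeping around the $alive$ signal: ensuring in the $\rightarrow$ direction that the history-dependent cutoff $k$ yields exactly the extension-equivalence $\rho' \approx \rho$ demanded by Lemma~\ref{lem:sat_equ}, and that the projection and augmentation relating $g$ and $g'$ are well-defined given that $alive$ is agent-controlled, so that no environment move can interfere with the termination decision. Everything else is structurally identical to the fair case, since the only semantic difference---$FG\alpha$ versus $GF\alpha$---enters solely through the vacuity branch, where a violated stability assumption discharges the goal exactly as a violated fairness assumption does.
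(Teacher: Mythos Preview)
Your proposal is correct and follows essentially the same approach as the paper's own proof: the same two-direction argument, the same case split on whether $FG\alpha$ holds, the same use of Lemma~\ref{lem:sat_equ} via extension equivalence, and the same projection/augmentation relating $g$ and $g'$ through the $alive$ bit. You are also right that the statement contains a typo and should read $FG\alpha \rightarrow \psi$; the paper's proof itself works with $FG\alpha$ throughout.
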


\begin{proof}
	We prove the two directions separately.
	\begin{itemize}
		\item 
		$\leftarrow: $ Since $FG \alpha \rightarrow \psi$ is realizable with respect to $\langle \X, \Y \cup \{alive\} \rangle$, there exists a winning strategy $g': (2^\X)^+ \rightarrow 2^{\Y \cup \{alive\}}$ such that every trace $\rho'$ that follows $g'$ gets $FG \alpha \rightarrow \psi$ hold, therefore, enabling either of the following situations:
		\begin{itemize}
			\item 
			$FG\alpha$ is true such that the environment behaves such as having $\alpha$ eventually hold forever, and $\psi$ holds. Therefore, $\rho' \models alive \wedge (alive~U~(G\neg alive)) $, and there exists a position $e$ such that $\rho'[i] \models alive$ for $0 \leq i \leq e$ and $\rho'[i] \models \neg alive$ for $i > e$. Thus we have a finite trace $\rho$ such that $\rho \approx \rho'$ with $e$. Therefore, $\rho \models \phi$ holds by Lemma~\ref{lem:sat_equ}.
			\item 
			$FG\alpha$ is falsified such that the environment behaves such as having $\neg \alpha$ hold for infinitely many times, in which case the stability assumption is violated, we conclude that $\rho \models \phi$ holds by default. 
		\end{itemize}
		Finally, in order to obtain the winning strategy $g$, we have $g(\lambda) = g'(\lambda)|_{\Y}$, where $\lambda \in (2^\X)^\omega$.
		
		\item
		$\rightarrow: $ Since $\langle \X, \Y, \alpha, \phi \rangle$ is realizable, there is a winning strategy $g: (2^\X)^+ \rightarrow 2^{\Y}$ such that for every trace $\rho$ that follows $g$, either of the following situations happens:
		\begin{itemize}
			\item 
			The environment behaves such as having $\alpha$ eventually hold forever such that $FG\alpha$ is true, then there is $k \geq 0$ such that $\rho^k \models \phi$. Since $alive$ is assigned as an agent variable, we can construct a trace $\rho'$ such that $\rho'[i] = \rho[i] \wedge alive$ for $0 \leq i \leq k$ and $\rho'[i] = \rho[i] \wedge \neg alive$ for $i > k$. Thus we have $\rho' \approx \rho$ such that $\rho' \models \psi$ by Lemma~\ref{lem:sat_equ}. Therefore, we conclude that $\rho' \models FG\alpha \rightarrow \psi$ holds.
			\item 
			The environment behaves such as violating the stability assumption such that having $\neg \alpha$ hold for infinitely many times, in which case $FG\alpha$ does not hold. Therefore, $FG\alpha \rightarrow \psi$ is true.
		\end{itemize}
		Finally, in order to obtain the winning strategy $g'$, we have $g'(\lambda) = g(\lambda) \wedge alive$ if $\phi$ has not been satisfied and $g'(\lambda) = g(\lambda) \wedge \neg alive$ since $\phi$ has been satisfied, where $\lambda \in (2^\X)^\omega$.
	\end{itemize}
\end{proof}

In general, nevertheless the form of the environment assumption $\psi_A$ is, the synthesis problem of \LTLf formula $\phi$ under such assumption can be reduced to standard \LTL synthesis problem $\psi_A \limp \psi$, where $\psi$ is the corresponding translated \LTL formula of $\phi$. The following theorem guarantees this reduction.
\begin{theorem}
	Let $\phi$ be an \LTLf formula, $\psi$ be the corresponding translated \LTL formula, then $\phi$ is realizable with respect to $\langle \X, \Y \rangle$ with assumption $\psi_A$ if and only if \LTL formula $\psi_A \rightarrow \psi$ is realizable with respect to $\langle \X, \Y \cup \{alive\} \rangle$.
\end{theorem}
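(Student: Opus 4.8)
The plan is to mirror, almost verbatim, the two-directional template already used for the fairness and stability theorems, now carrying the generic assumption $\psi_A$ through the argument in place of $GF\alpha$ or $FG\alpha$. The only property of $\psi_A$ I would actually exploit is that it is $alive$-free, i.e.\ it is an \LTL formula over $\X \cup \Y$ that does not mention the fresh variable $alive$; this guarantees that inserting the $alive$ bit into a play leaves the truth value of $\psi_A$ unchanged, which is what makes the case split below transfer faithfully between the two problems. First I would fix the reading of ``$\phi$ realizable under $\psi_A$'' by analogy with the fair and stable definitions: there is a strategy $g:(2^\X)^+ \rightarrow 2^\Y$ such that for every environment trace $\lambda$, whenever the induced infinite play satisfies $\psi_A$, some finite prefix $\rho^k$ satisfies $\phi$. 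This specializes to the earlier definitions when $\psi_A$ is $GF\alpha$ or $FG\alpha$, so the two preceding theorems are recovered as instances.

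For the $\leftarrow$ direction I would start from a winning strategy $g':(2^\X)^+ \rightarrow 2^{\Y \cup \{alive\}}$ for $\psi_A \limp \psi$ and split on whether a play $\rho'$ following $g'$ satisfies $\psi_A$. If $\psi_A$ fails on $\rho'$ then the environment has violated the assumption and the original requirement holds vacuously. If $\psi_A$ holds, then $\psi$ holds along $\rho'$; in particular $\rho' \models alive \wedge (alive~U~(G\neg alive))$, which pins down a last ``alive'' position $e$ and hence a finite trace $\rho$ with $\rho \approx \rho'$. Lemma~\ref{lem:sat_equ} then yields $\rho \models \phi$. Projecting the lifted strategy via $g(\lambda) = g'(\lambda)|_{\Y}$ produces a winning strategy for the original problem.

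For the $\rightarrow$ direction I would take a winning strategy $g$ for $\phi$ under $\psi_A$ and lift it by deciding the $alive$ bit: keep $alive$ true until the first index $k$ at which $\rho^k \models \phi$, and set $alive$ false from $k+1$ onward. On a play $\rho$ following $g$, if $\psi_A$ holds then such a $k$ exists and the $alive$-augmented trace $\rho'$ satisfies $\rho' \approx \rho^k$, so $\rho' \models \psi$ by Lemma~\ref{lem:sat_equ}, whence $\rho' \models \psi_A \limp \psi$; if $\psi_A$ fails the implication is again vacuously satisfied. Collecting these choices defines a winning strategy $g'$ for $\psi_A \limp \psi$ with respect to $\langle \X, \Y \cup \{alive\} \rangle$.

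The step I expect to require the most care is verifying that augmenting a play with the $alive$ bit does not change the truth value of $\psi_A$, so that the case split ``$\psi_A$ holds / $\psi_A$ fails'' corresponds under the correspondence $\rho \leftrightarrow \rho'$; this is precisely where I rely on $\psi_A$ being $alive$-free, and it subsumes both the environment-only ($\X$) and combined ($\X \cup \Y$) readings of the assumption uniformly. All the remaining technical content is packaged inside Lemma~\ref{lem:sat_equ}, which converts finite-trace satisfaction of $\phi$ into infinite-trace satisfaction of $\psi$ under extension equivalence; once the bookkeeping of the last-alive position $e$ (resp.\ the satisfaction index $k$) is matched to the definition of $\approx$, both directions close with no further computation.
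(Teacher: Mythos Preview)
Your proposal is correct and follows essentially the same approach as the paper's own proof: both directions proceed by the same case split on whether $\psi_A$ holds along the play, invoke Lemma~\ref{lem:sat_equ} to transfer satisfaction between $\phi$ and $\psi$ via extension equivalence, and construct the translated strategy by projecting out or filling in the $alive$ bit exactly as you describe. Your explicit remark that $\psi_A$ must be $alive$-free so that its truth value is invariant under the $\rho \leftrightarrow \rho'$ correspondence is a point the paper leaves implicit, so if anything your write-up is slightly more careful on that score.
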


\begin{proof}
	We prove the two directions separately.
	\begin{itemize}
		\item 
		$\leftarrow: $ Since $\psi_A \rightarrow \psi$ is realizable with respect to $\langle \X, \Y \cup \{alive\} \rangle$, there exists a winning strategy $g': (2^\X)^+ \rightarrow 2^{\Y \cup \{alive\}}$ such that every trace $\rho'$ that follows $g'$ gets $\psi_A \rightarrow \psi$ hold, therefore, enabling either of the following situations:
		\begin{itemize}
			\item 
			$\psi_A$ is true such that the environment behaves such as having $\psi_A$ hold, and $\psi$ holds. Therefore, $\rho' \models alive \wedge (alive~U~(G\neg alive)) $, and there exists a position $e$ such that $\rho'[i] \models alive$ for $0 \leq i \leq e$ and $\rho'[i] \models \neg alive$ for $i > e$. Thus we have a finite trace $\rho$ such that $\rho \approx \rho'$ with $e$. Therefore, $\rho \models \phi$ holds by Lemma~\ref{lem:sat_equ}.
			\item 
			$\psi_A$ is falsified such that the environment behaves such as having assumption $\psi_A$ get violated, we conclude that $\rho \models \phi$ holds by default. 
		\end{itemize}
		Finally, in order to obtain the winning strategy $g$, we have $g(\lambda) = g'(\lambda)|_{\Y}$, where $\lambda \in (2^\X)^\omega$.
		
		\item
		$\rightarrow: $ Since $\phi$ is realizable with respect to $\langle \X, \Y \rangle$ under assumption $\psi_A$, there is a winning strategy $g: (2^\X)^+ \rightarrow 2^{\Y}$ such that for every trace $\rho$ that follows $g$, either of the following situations happens:
		\begin{itemize}
			\item 
			The environment behaves such as having $\psi_A$ hold, then there is $k \geq 0$ such that $\rho^k \models \phi$. Since $alive$ is assigned as an agent variable, we can construct a trace $\rho'$ such that $\rho'[i] = \rho[i] \wedge alive$ for $0 \leq i \leq k$ and $\rho'[i] = \rho[i] \wedge \neg alive$ for $i > k$. Thus we have $\rho' \approx \rho$ such that $\rho' \models \psi$ by Lemma~\ref{lem:sat_equ}. Therefore, we conclude that $\rho' \models \psi_A \rightarrow \psi$ holds.
			\item 
			The environment behaves such as violating the assumption $\psi_A$, in which case $\psi_A \rightarrow \psi$ is true.
		\end{itemize}
		Finally, in order to obtain the winning strategy $g'$, we have $g'(\lambda) = g(\lambda) \wedge alive$ if $\phi$ has not been satisfied and $g'(\lambda) = g(\lambda) \wedge \neg alive$ since $\phi$ has been satisfied, where $\lambda \in (2^\X)^\omega$.
	\end{itemize}
\end{proof}

\end{document}